\theoremstyle{plain}
\newtheorem{theorem}{Theorem}
\newtheorem{proposition}{Proposition}
\theoremstyle{definition}
\newtheorem{definition}{Definition}
\newtheorem{remark}{Remark}
\newtheorem{example}{Example}
\icmltitlerunning{\icmlourtitle}
\begin{document}
\twocolumn[
\icmltitle{\icmlourtitle}


\icmlauthor{Olivier Fercoq}{olivier.fercoq@telecom-paristech.fr}
\icmlauthor{Alexandre Gramfort}{alexandre.gramfort@telecom-paristech.fr}
\icmlauthor{Joseph Salmon}{joseph.salmon@telecom-paristech.fr}
\icmladdress{Institut Mines-T\'el\'ecom,
T\'el\'ecom ParisTech, CNRS LTCI\\
46 rue Barrault, 75013, Paris, France\\}

\icmlkeywords{Safe rules, Convex Optimization, Duality gap, Lasso, Screening Test
}

\vskip 0.3in
]

\begin{abstract}
Screening rules allow to early discard irrelevant variables from the
optimization in Lasso problems, or its derivatives, making solvers faster. In
this paper, we propose new versions of the so-called \textit{safe rules} for the
Lasso. Based on duality gap considerations, our new rules create safe
test regions whose diameters converge to zero, provided that one relies on
a converging solver. This property helps screening out more variables, for a
wider range of regularization parameter values. In addition to faster
convergence, we prove that we correctly identify the active sets (supports) of
the solutions in finite time. While our proposed strategy can cope with any
solver,  its performance is demonstrated using a coordinate descent algorithm
particularly adapted to machine learning use cases. Significant computing time
reductions are obtained with respect to previous safe rules.
\end{abstract}


\section{Introduction}
Since the mid 1990's, high dimensional statistics has attracted considerable
attention, especially in the context of linear regression with more
explanatory variables than observations: the so-called $p>n$ case. In such a
context, the least squares with $\ell_1$ regularization, referred to as the
Lasso \cite{Tibshirani96} in statistics, or Basis Pursuit
\cite{Chen_Donoho_Saunders98} in signal processing, has been one of the most
popular tools. It enjoys theoretical guarantees \cite{Bickel_Ritov_Tsybakov09},
as well as practical benefits: it provides sparse solutions and fast convex
solvers are available. This has made the Lasso a popular method in modern
data-science tool\-kits. Among successful fields where it has been applied, one
can mention dictionary learning \cite{Mairal},
bio-statistics~\cite{Haury_Mordelet_Verra-Licona_Vert12}
and medical imaging~\cite{Lustig_Donoho_Pauly07,Gramfort_Kowalski_Hamalainen12}
to name a few.

Many algorithms exist to approximate Lasso solutions, but it is still a burning
issue to accelerate solvers in high dimensions. Indeed,
although some other variable selection and prediction methods exist
\cite{Fan_Lv2008}, the best performing methods usually rely on the Lasso.
For stability selection methods
\cite{Meinshausen_Buhlmann10,Bach08b,Varoquaux_Thirion_Gramfort12}, hundreds
of Lasso problems need to be solved. For non-convex approaches such as SCAD
\cite{Fan_Li01} or MCP \cite{Zhang10}, solving the Lasso is often a
required preliminary step \cite{Zou06,Zhang_Zhang12,Candes_Wakin_Boyd08}.

Among possible algorithmic candidates for solving the Lasso, one can
mention homotopy methods \cite{Osborne_Presnnell_Turlach00}, LARS
\cite{Efron_Hastie_Johnstone_Tibshirani04}, and approximate homotopy
\cite{Mairal_Yu12}, that provide solutions for the full Lasso
path, \ie for all possible choices of tuning parameter $\lambda$. More
recently, particularly for $p>n$, coordinate descent
approaches \cite{Friedman_Hastie_Hofling_Tibshirani07} have proved to be among
the best methods to tackle large scale problems.

Following the seminal work by \citet{ElGhaoui_Viallon_Rabbani12}, screening
techniques have emerged as a way to exploit the known sparsity of the solution
by discarding features prior to starting a Lasso solver.
Such techniques are coined \emph{safe rules} when they screen out
coefficients guaranteed to be zero in the targeted optimal solution. Zeroing those
coefficients allows to focus more precisely on the non-zero ones (likely to
represent signal) and helps reducing the computational burden. We refer to
\cite{Xiang_Wang_Ramadge14} for a concise introduction on
safe rules. Other alternatives have tried to screen the Lasso
relaxing the ``safety''. Potentially, some variables are wrongly disregarded
and post-processing is needed to recover them. This is for instance the
strategy adopted for the \textit{strong rules}
\cite{Tibshirani_Bien_Friedman_Hastie_Simon_Tibshirani12}.

The original basic safe rules operate as follows: one chooses a fixed tuning
parameter $\lambda$, and before launching any solver, tests whether a
coordinate can be zeroed or not (equivalently if the corresponding variable
can be disregarded or not). We will refer to such safe rules as
\textit{static safe rules}. Note that the test is performed according to a safe
region, \ie a region containing a dual optimal solution of the Lasso problem.
In the static case, the screening is performed only once, prior any optimization iteration. Two directions have emerged to improve on static strategies.

\begin{itemize}
   \item
The first direction is oriented towards the resolution of the
Lasso for a large number of tuning parameters. Indeed, practitioners commonly
compute the Lasso over a grid of parameters and select the
best one in a data-driven manner, \eg by cross-validation.
As two consecutive $\lambda's$ in the grid lead to similar solutions, knowing
the first solution may help improve screening for the second one. We call
\textit{sequential safe rules} such strategies, also referred to as recursive
safe rules in \cite{ElGhaoui_Viallon_Rabbani12}. This road has been pursued in
\cite{Wang_Zhou_Wonka_Ye13,Xu_Ramadge13,Xiang_Wang_Ramadge14}, and can be thought of as a
``warm start'' of the screening (in addition to the warm start
of the solution itself). When performing sequential safe rules, one should keep in mind that generally, only an approximation of the previous dual solution is computed. Though, the safety of the rule is guaranteed only
if one uses the exact solution. Neglecting this issue, leads to
``unsafe'' rules:  relevant variables might be wrongly disregarded.
  \item
The second direction aims at improving the screening by
interlacing it throughout the optimization algorithm itself: although screening
might be useless at the beginning of the algorithm, it might become (more) efficient as the algorithm proceeds towards the optimal solution. We
call these strategies \textit{dynamic safe rules} following \cite{Bonnefoy_Emiya_Ralaivola_Gribonval14,Bonnefoy_Emiya_Ralaivola_Gribonval15}.
\end{itemize}
Based on convex optimization arguments, we leverage duality gap
computations to propose a simple strategy unifying both sequential
and dynamic safe rules. We coined GAP SAFE rules such safe rules.

The main contributions of this paper are 1) the introduction of new safe rules
which demonstrate a clear practical improvement compared to prior strategies
2) the definition of a theoretical framework for comparing safe
rules by looking at the convergence of their associated safe regions.

In Section~\ref{sec:safe_rules}, we present the framework and the basic
concepts which guarantee the soundness of static and dynamic screening rules.
Then, in Section~\ref{sec:new_contributions}, we introduce the new concept of
converging safe rules. Such rules identify in finite time the active variables
of the optimal solution (or equivalently the inactive variables),
and the tests become more and more precise as the optimization
algorithm proceeds. We also show that our new GAP SAFE rules, built on dual
gap computations, are converging safe rules since their associated safe regions
have a diameter converging to zero. We also explain how our GAP SAFE tests
are sequential by nature. Application of our GAP SAFE rules with a coordinate
descent solver for the Lasso problem is proposed in
Section~\ref{sec:experiments}. Using standard data-sets, we report the time improvement compared to prior safe rules.

\subsection{Model and notation}
We denote by $[d]$ the set $\{1, \ldots, d\}$
for any integer $d\in\bbN$. Our observation vector is $y \in \bbR^n$ and the
design matrix $X= [x_1,\cdots,x_p ] \in \bbR^{n\times p}$ has $p$ explanatory
variables (or features) column-wise. We aim at approximating $y$ as a linear combination of few variables $x_j$'s,
hence expressing $y$ as $X \beta$ where $\beta \in \bbR^p$ is a sparse vector.
The standard Euclidean norm is written
$\|\cdot\|$, the $\ell_1$ norm $\|\cdot\|_1$, the $\ell_\infty$ norm
$\|\cdot\|_\infty$, and the matrix transposition of
a matrix $Q$ is denoted by ${Q}^\top$. We denote $(t)_+=\max(0,t)$.

For such a task, the Lasso is often considered (see \citet{Buhlmann_vandeGeer11} for an introduction).
For a tuning parameter $\lambda>0$, controlling the
trade-off between data fidelity and sparsity of the solutions, a Lasso
estimator $\tbeta{\lambda}$ is any solution of the primal optimization
problem
\begin{equation}\label{eq:Lasso}
\tbeta{\lambda} \in \argmin_{\beta \in \bbR^p}\underbrace{\frac{1}{2} \norm{X
\beta - y}_2^2 + \lambda \norm{\beta}_1}_{=P_\lambda(\beta)} \enspace .
\end{equation}
Denoting $\dual = \big\{ \theta \in \bbR^n \; : \;
\abs{x_j^\top \theta } \leq 1, \forall j \in [p] \big\}$ the dual feasible set,
a dual formulation of the Lasso reads
(see for instance \citet{Kim_Koh_Lustig_Boyd_Gorinevsky07} or
\citet{Xiang_Wang_Ramadge14}):
\begin{equation}\label{eq:dual_problem}
\ttheta{\lambda}=\argmax_{\theta \in \dual \subset \bbR^n}
\underbrace{\frac{1}{2}\norm{y}^2_2 - \frac{\lambda^2}{2}\norm{\theta -
\frac{y}{\lambda}}^2_2}_{=D_\lambda(\theta)}.
\end{equation}
We can reinterpret Eq.~\eqref{eq:dual_problem}
as $\ttheta{\lambda}=\Pi_{\dual}(y/\lambda)$, where $\Pi_{\mathcal{C}}$
refers to the projection onto a closed convex set $\mathcal{C}$. In particular,
this ensures that the dual solution
$\ttheta{\lambda}$ is always unique, contrarily to the primal
$\tbeta{\lambda}$.

\begin{table*}[t!]\centering
\small
\def\arraystretch{1.5}
\begin{tabular}{|c|c|c|c|}
\hline
Rule & Center & Radius & Ingredients  \\ \hline
Static  Safe \cite{ElGhaoui_Viallon_Rabbani12} & $y/\lambda $&
$\largeR[\frac{y}{\lambda_{\max}}]$ & $\lambda_{\max}=\|X^\top y
\|_\infty\!=\!|x_{j^\star}^\top y |$\\
 \hline
Dynamic ST3 \cite{Xiang_Xu_Ramadge11} & $y/\lambda-\delta x_{j^\star}\!$ &$
(\largeR[\theta_k]^{2}-\delta^2)^{\frac{1}{2}} $ & $\delta=
\left(\frac{\lambda_{\max}}{\lambda}-1\right)/\|x_{j^\star}\|^2$ \\\hline
Dynamic Safe \cite{Bonnefoy_Emiya_Ralaivola_Gribonval14}&$ y/\lambda$ &
$\largeR[\theta_k]$
& $\theta_k \in \dual$ (\eg as in \eqref{eq:thetak} ) \\ \hline
Sequential \cite{Wang_Zhou_Wonka_Ye13} & $\ttheta{\lambda_{t-1}}$&
$\left|\frac{1}{\lambda_{t-1}}-\frac{1}{\lambda_{t}}\right| \|y\|$ &
exact $\ttheta{\lambda_{t-1}}$ required\\ \hline
GAP SAFE sphere (proposed)&$ \theta_k$ & $\bestR[\lambda_t]{\beta_k}{\theta_k}=
\frac{1}{\lambda_t}\sqrt{2 G_{\lambda_t}(\beta_k,\theta_k)}
$ & dual gap for
$\beta_k,\theta_k$
\\ \hline
\end{tabular}
\caption{Review of some common safe sphere tests.}
\label{table:spheres}
\end{table*}

\subsection{A KKT detour}
For the Lasso problem, a primal solution $\tbeta{\lambda}\in\bbR^p$
and the dual solution $\ttheta{\lambda}\in\bbR^n$
are linked through the relation:
\begin{equation}\label{eq:primal_dual}
y= X\tbeta{\lambda}+\lambda \ttheta{\lambda} \enspace .
\end{equation}
The Karush-Khun-Tucker (KKT) conditions state:
\begin{equation}\label{eq:KKT}
\forall j \in [p], \; x_j^{\top} \ttheta{\lambda} \in
\begin{cases}
\{\sign(\tbeta{\lambda}_j)\} & \, \text{if}\quad \tbeta{\lambda}_j \neq 0, \\
[-1,1]                       & \, \text{if}\quad\tbeta{\lambda}_j=0.\\
\end{cases}
\end{equation}
See for instance \cite{Xiang_Wang_Ramadge14} for more details.
The KKT conditions lead to the fact that
for $\lambda \geq \lambda_{\max}=\|X^\top y\|_\infty$,
$0\in \bbR^p$ is a primal solution. It can be considered as the mother of
all safe screening rules. So from now on, we assume that $\lambda \leq
\lambda_{\max}$ for all the considered $\lambda$'s.

\section{Safe rules}\label{sec:safe_rules}
Safe rules exploit the KKT condition \eqref{eq:KKT}.
This equation implies that $\tbeta{\lambda}_j=0$ as soon as
$|x_j^{\top} \ttheta{\lambda}| < 1$. The main challenge is that the
dual optimal solution is unknown. Hence, a safe rule aims at
constructing a set $\mathcal{C} \subset \bbR^n$ containing $\ttheta{\lambda}$.
We call such a set $\mathcal{C}$ a \emph{safe region}. Safe regions are all the
more helpful that for many $j$'s, $\mu_{\mathcal{C}}(x_j):=
\sup_{\theta \in \mathcal{C}}|x_j^{\top} \theta |<1$, hence
for many $j$'s, $\tbeta{\lambda}_j=0$.

Practical benefits are obtained if one can construct a region
$\mathcal{C}$ for which it is easy to compute
its \emph{support function}, denoted by
$\sigma_{\mathcal{C}}$ and defined for any $x \in \bbR^n$ by:
\begin{equation}
 \sigma_{\mathcal{C}}(x)=\max_{\theta \in \mathcal{C}}~ x^{\top}\theta \, .
\end{equation}
Cast differently, for any safe region $\mathcal{C}$,
any $j \in [p]$, and any primal optimal solution $\tbeta{\lambda}$, the
following holds true:
\begin{equation}\label{eq:sphere_test}
\text{If } \mu_{\mathcal{C}}(x_j) =
\max(\sigma_{\mathcal{C}}(x_j),\sigma_{\mathcal{ C}} (-x_j) ) <1 \text{ then }
\tbeta{\lambda}_j \!= 0.
\end{equation}
We call \emph{safe test} or \emph{safe rule}, a test associated to $\mathcal{C}$
and screening out explanatory variables thanks to
Eq.~\eqref{eq:sphere_test}.

\begin{remark}\label{rem:convex_hull}
Reminding that the support function of a set is the same as the support function
of its closed convex hull
\cite{Hiriart-Urruty_Lemarechal93}[Proposition~V.2.2.1], we restrict
our search to closed convex safe regions.
\end{remark}

Based on a safe region $\mathcal{C}$ one can partition the explanatory variables
into a safe active set $A^{\lambda}(\mathcal{C})$ and a safe zero set
$Z^{\lambda}(\mathcal{C})$ where:
\begin{align}
 A^{(\lambda)}(\mathcal{C}) = \{j \in [p] : \mu_{\mathcal{C}}(x_j) \geq 1\},
\label{eq:active_set}\\
 Z^{(\lambda)}(\mathcal{C}) = \{j \in [p] : \mu_{\mathcal{C}}(x_j) < 1\}.
\end{align}
Note that for nested safe regions $\mathcal{C}_1 \subset
\mathcal{C}_2 $ then
$A^{(\lambda)}(\mathcal{C}_1) \subset A^{(\lambda)}(\mathcal{C}_2)$.
Consequently, a natural goal is to find safe regions as small as possible:
narrowing safe regions can only  increase the number of screened out variables.

\begin{remark}\label{rem:equicorrelation_set}
If $\mathcal{C}=\{\ttheta{\lambda}\}$,
the safe active set is the equicorrelation set
$A^{(\lambda)}(\mathcal{C})=\mathcal{E}_{\lambda}:=\{j \in [p] :
|x_j^\top\ttheta{\lambda}|= 1  \}$
(in most cases \cite{Tibshirani13} it is exactly the support of
$\tbeta{\lambda}$). Even when the Lasso is
not unique, the equicorrelation set contains all the solutions' supports.
The other extreme case is when $\mathcal{C}=\dual$,
and $A^{(\lambda)}(\mathcal{C})=[p]$. Here, no variable is
screened out: $Z^{(\lambda)}(\mathcal{C})= \emptyset$ and the screening is
useless.
\end{remark}

We now consider common safe regions whose support functions are easy to obtain
in closed form. For simplicity we focus only on balls and domes, though
more complicated regions could be investigated \cite{Xiang_Wang_Ramadge14}.

\subsection{Sphere tests}
Following previous work on safe rules, we call \emph{sphere tests}, tests
relying on balls as safe regions. For a sphere test, one chooses a ball
containing $\ttheta{\lambda}$ with center $c$ and radius $r$, \ie
$\mathcal{C}=B(c,r)$. Due to their simplicity, safe spheres have been the
most commonly investigated safe regions (see for instance Table
\ref{table:spheres} for a brief review). The corresponding test is
defined as follows:
\begin{equation}\label{ineq:sphere_test}
\text{If }  \mu_{B(c,r)}(x_j)=|x_j^\top c|+r \|x_j\|<1, \text{ then }
\tbeta{\lambda}_j=0.
\end{equation}
Note that for a fixed center, the smaller the radius, the better the
safe screening strategy.

\begin{example}\label{ex:elghaoui}
The first introduced sphere test \cite{ElGhaoui_Viallon_Rabbani12} consists
in using the center $c=y/\lambda$ and radius $r=|1/\lambda
-1/\lambda_{\max}|\|y\|$. Given that $\ttheta{\lambda}=\Pi_{\dual}(y/\lambda)$,
this is a safe region since $y/\lambda_{\max} \in \dual$ and
$\|y/\lambda_{\max} - \Pi_{\dual}(y/\lambda)\| \leq \|y\|   |1/\lambda
-1 / \lambda_{\max}|$. However, one can check that this static safe rule is
useless as soon as
\begin{equation}\label{eq:lambda_critic_static}
\frac{\lambda}{\lambda_{\max}} \leq \min_{j \in [p]} \left(\frac{1 +
|x_j^\top y | / (\|x_j\| \|y\|)}{1+\lambda_{\max}/(\|x_j\| \|y\|)} \right).
\end{equation}
\end{example}

\subsection{Dome tests}
Other popular safe regions are \emph{domes}, the intersection
between a ball and a half-space. This kind of safe region has been considered
for instance in
\cite{ElGhaoui_Viallon_Rabbani12,Xiang_Ramadge11,Xiang_Wang_Ramadge14,
Bonnefoy_Emiya_Ralaivola_Gribonval15}.
We denote $D(\domecenter,\domeradius,\domeprop,\domenormal)$ the dome with
ball center $\domecenter$, ball radius $\domeradius$,
oriented hyperplane with unit normal vector $\domenormal$ and
parameter $\domeprop$ such that $c-\domeprop r \domenormal$ is the
projection of $c$ on the hyperplane (see Figure \ref{fig:dome} for an
illustration in the interesting case $\alpha>0$).
\begin{remark}
The dome is non-trivial whenever $\alpha \in [-1,1]$. When $\alpha=0$, one gets
simply a hemisphere.
\end{remark}

For the dome test one needs to compute the support function
for $\mathcal{C}=D(\domecenter,\domeradius,\domeprop,\domenormal)$.
Interestingly, as for balls, it can be obtained in a closed form. Due to its
length though, the formula is deferred to the Appendix (see also
\cite{Xiang_Wang_Ramadge14}[Lemma
3] for more details).

\begin{figure}
\centering
\includegraphics[width = 0.7\linewidth]{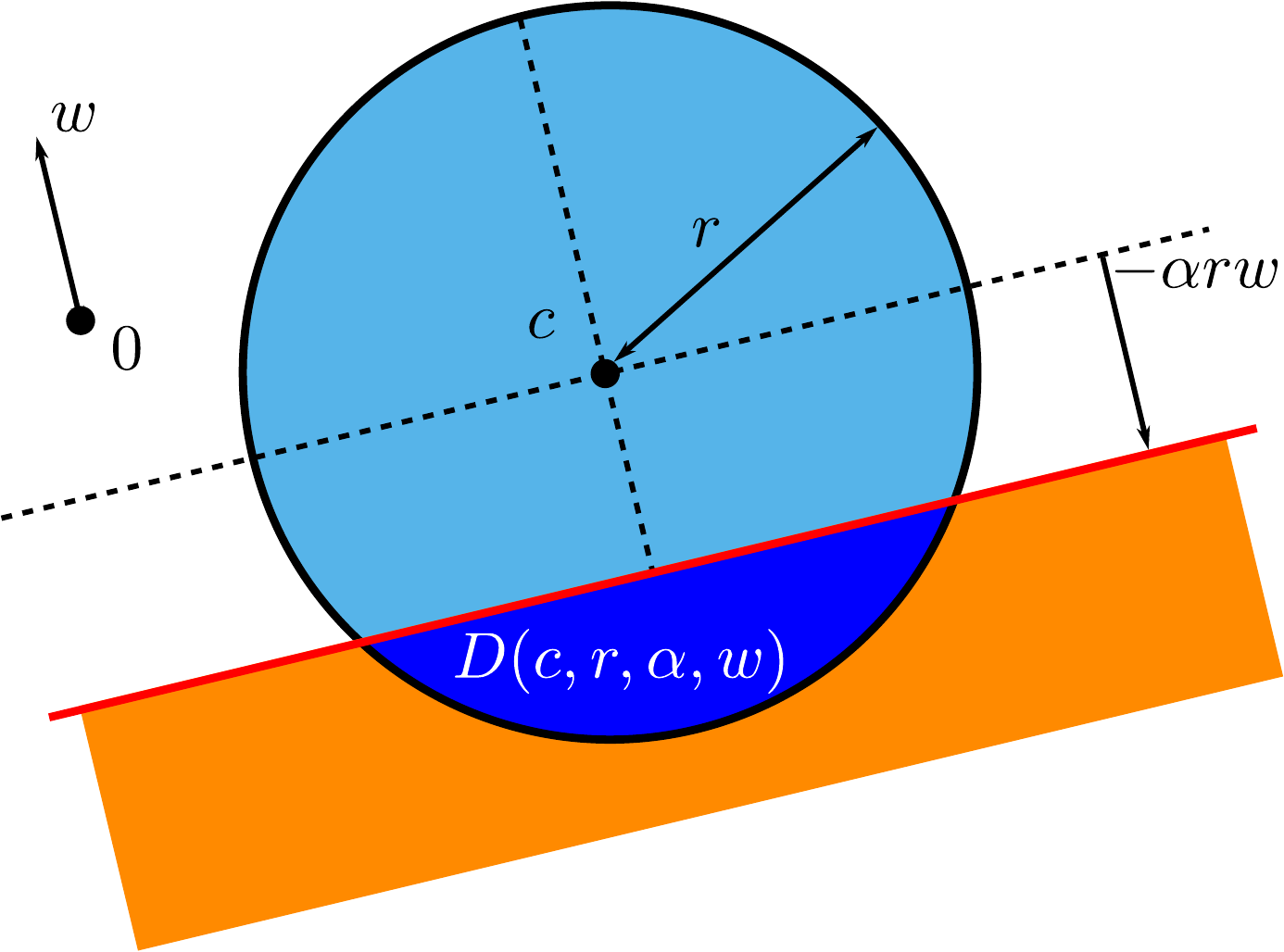}
\caption{Representation of the dome
$D(\domecenter,\domeradius,\domeprop,\domenormal)$ (dark blue). In our case,
note that $\alpha$ is positive.}
\label{fig:dome}
\end{figure}

\subsection{Dynamic safe rules} \label{sec:dynamic_safe_rules}
For approximating a solution $\tbeta{\lambda}$ of the
Lasso primal problem $P_{\lambda}$, iterative algorithms are commonly used.
We denote $\beta_k \in \bbR^p$ the current estimate after $k$ iterations of any
iterative algorithm (see Section \ref{sec:experiments} for a specific study on
coordinate descent). Dynamic safe rules aim at discovering safe regions that
become narrower as $k$ increases. To do so, one first needs dual feasible
points: $\theta_k \in \dual$.
Following \citet{ElGhaoui_Viallon_Rabbani12} (see also
\cite{Bonnefoy_Emiya_Ralaivola_Gribonval14}), this can be achieved by a simple
transformation of the current residuals $\rho_k = y - X \beta_k$, defining
$\theta_k$ as
\begin{equation}\label{eq:thetak}
\begin{cases}
\theta_k \!=\! {\alpha}_k \rho_k,\\
 \alpha_k\! = \!\min \!\Big[\!\max \!\left(\frac{y^\top \!\rho_k}{\lambda
\norm{\rho_k}^2},\!\frac{-1}{\norm{X^\top
\rho_k}_{\infty}}\right)\!\!,\!\frac{1}{\norm{X^\top
\rho_k}_{\infty}}\Big].\\
\end{cases}
\end{equation}
Such dual feasible $\theta_k$ is proportional to $\rho_k$, and is the
closest point (for
the norm $\|\cdot\|$) to $y/\lambda$ in $\dual$ with such a property, \ie
$\theta_k=\Pi_{\dual
\cap \Span(\rho_k)}(y/\lambda)$. A reason for choosing this dual point
is that the dual optimal solution $\ttheta{\lambda}$ is the projection of
$y/\lambda$ on
the dual feasible set $\dual$, and  the optimal $\ttheta{\lambda}$
is proportional to $y - X \tbeta{\lambda}$, \lcf
Equation~\eqref{eq:primal_dual}.

\begin{remark}
\label{rem:convergence}
Note that if $\lim_{k \to +\infty} \beta_k=\tbeta{\lambda}$ (convergence of the
primal) then with the previous display and \eqref{eq:primal_dual}, we can show
that
 $\lim_{k \to +\infty} \theta_k=\ttheta{\lambda}$.
Moreover, the convergence of the primal is unaltered by
safe rules: screening out unnecessary coefficients of $\beta_k$, can only decrease the distance between $\beta_k$ and $\tbeta{\lambda}$.
\end{remark}

\begin{example}\label{ex:dynamic}
Note that any
dual feasible point $\theta\in\dual$ immediately provides a ball that
contains $\ttheta{\lambda}$ since
\begin{equation}\label{ineq:largeR}
\norm{\ttheta{\lambda} - \frac{y}{\lambda}} = \min_{\theta' \in \dual}
\norm{\theta' - \frac{y}{\lambda}}
\leq  \norm{\theta - \frac{y}{\lambda}} := \largeR[\theta].
 \end{equation}
The ball $B\big(y/\lambda, \largeR[\theta_k]\big)$ corresponds to the simplest
safe region introduced in~\cite{Bonnefoy_Emiya_Ralaivola_Gribonval14,
Bonnefoy_Emiya_Ralaivola_Gribonval15} (\lcf Figure~\ref{fig:couronne1} for
more insights).
When the algorithm proceeds, one expects that $\theta_k$ gets closer to
$\ttheta{\lambda}$, so $\|\theta_k - y/\lambda\|$
should get closer to $\|\ttheta{\lambda} - y/\lambda\|$. Similarly to Example
\ref{ex:elghaoui}, this dynamic rule becomes useless once
$\lambda$ is too small. More
precisely, this occurs as soon as
\begin{equation}\label{eq:lambda_critic_dynamic_hard}
\frac{\lambda}{\lambda_{\max}} \leq \min_{j \in [p]} \left(\frac{1 +
|x_j^\top y | / (\|x_j\| \|y\|)}{ \lambda_{\max} \|\ttheta{\lambda}\|/\|y\|\\
+\lambda_{\max}/(\|x_j\| \|y\|)} \right).
\end{equation}
Noticing that $\|\ttheta{\lambda}\|\leq \|y/\lambda\|$ (since $\Pi_{\dual}$ is a
contraction and $0\in\dual$) and proceeding as for
\eqref{eq:lambda_critic_static}, one can show that
this dynamic safe rule is inefficient when:
\begin{equation}\label{eq:lambda_critic_dynamic}
\frac{\lambda}{\lambda_{\max}} \leq \min_{j \in [p]}
\left(
\frac{|x_j^\top y |}
{\lambda_{\max}}
\right).
\end{equation}
This is a critical threshold, yet the screening might
stop even at a larger $\lambda$ thanks to
Eq.~\eqref{eq:lambda_critic_dynamic_hard}. In practice
the bound in Eq.~\eqref{eq:lambda_critic_dynamic_hard} cannot be evaluated a
priori due to the term $\|\ttheta{\lambda}\|$). Note also that the bound
in Eq.~\eqref{eq:lambda_critic_dynamic} is close to the one in
Eq.~\eqref{eq:lambda_critic_static}, explaining the similar behavior observed
in our experiments (see Figure~\ref{fig:screening_proportion} for instance).
\end{example}




\begin{figure*}%
\centering
\subfigure[Location of the dual optimal $\ttheta{\lambda}$ in the
annulus.]{\includegraphics[width=.24\linewidth]{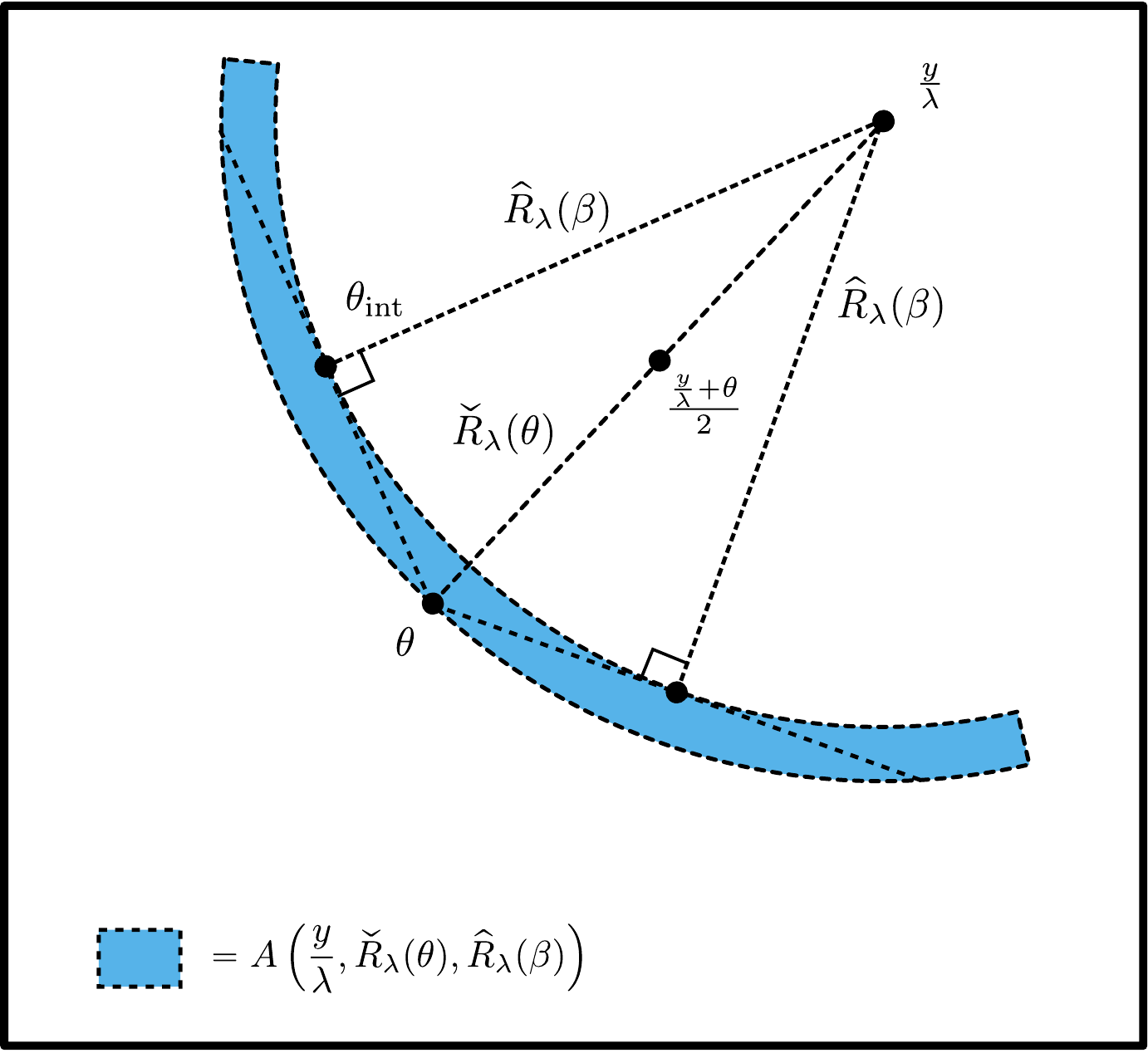}}
\subfigure[Refined location of the dual optimal $\ttheta{\lambda}$  (dark
blue).]{\includegraphics[width=.24\linewidth]{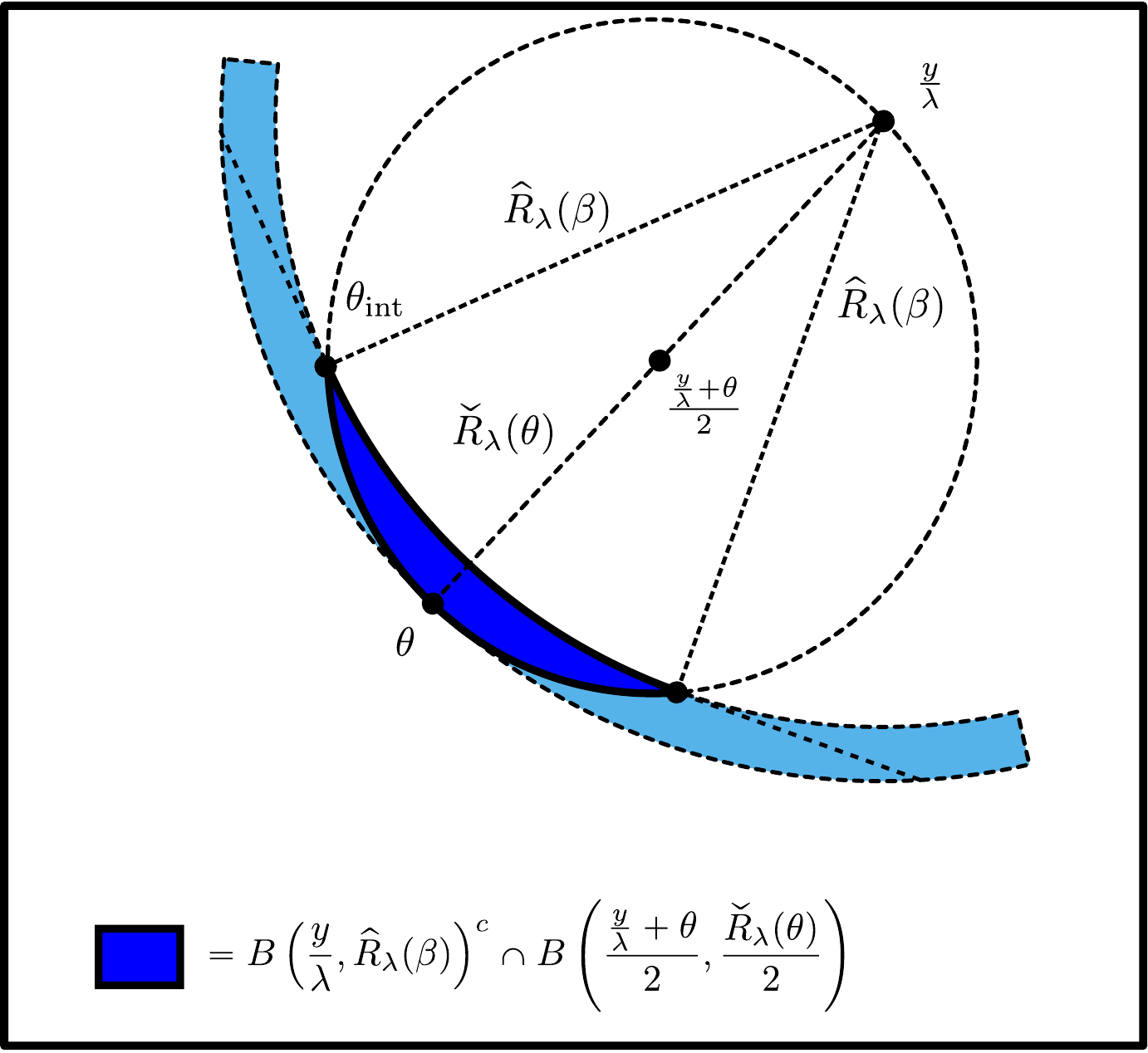}}
\subfigure[Proposed GAP SAFE sphere
(orange).]{\includegraphics[width=.24\linewidth]{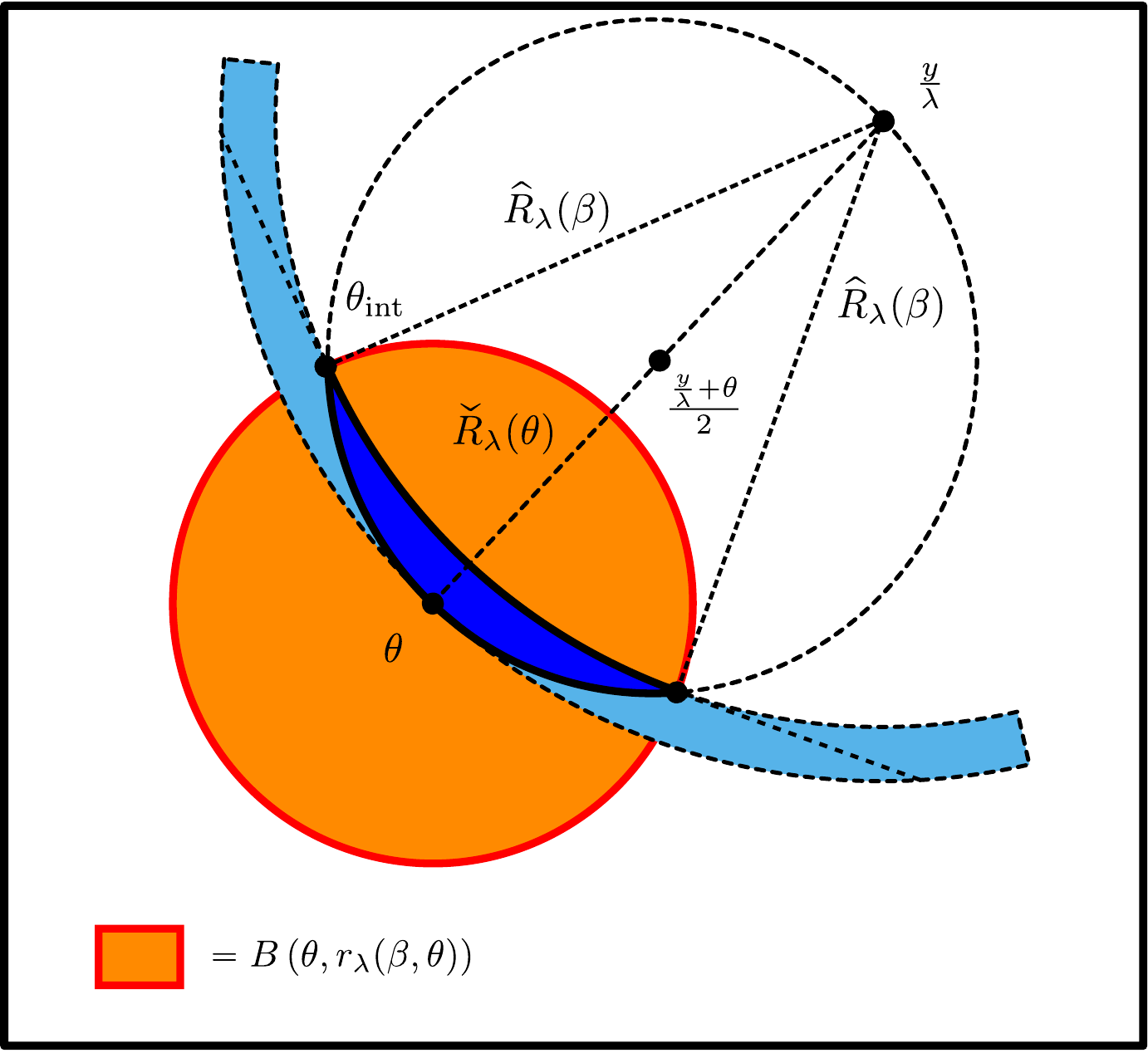}}
\subfigure[Proposed GAP SAFE dome
(orange).]{\includegraphics[width=.24\linewidth]{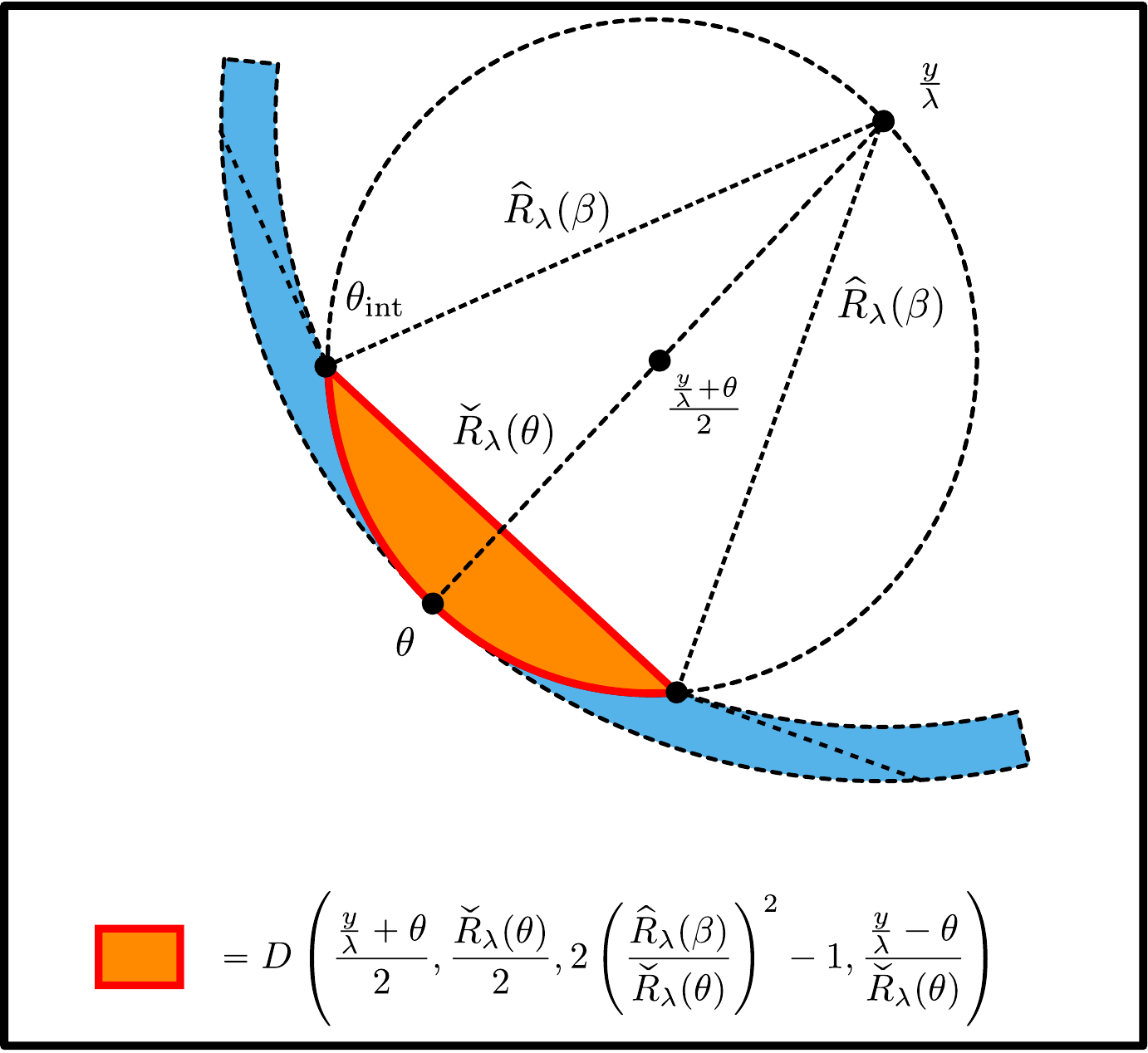}}
\caption{Our new GAP SAFE sphere and dome (in orange). The dual optimal
solution $\ttheta{\lambda}$ must lie in the dark blue region; $\beta$ is any
point in $\bbR^p$, and $\theta$ is any point in the dual feasible set $\dual$.
Remark that the GAP SAFE dome is included in the GAP SAFE sphere, and that it is
the convex hull of the dark blue region.}
\label{fig:couronne1}
\end{figure*}

\section{New contributions on safe rules}\label{sec:new_contributions}

\subsection{Support discovery in finite time}

Let us first introduce the notions of converging safe regions and converging
safe tests.

\begin{definition}
Let $(\mathcal{C}_{k})_{k \in \bbN}$ be a sequence of closed convex sets in
$\bbR^n$ containing $\ttheta{\lambda}$. It is a converging sequence of safe
regions for the Lasso with parameter $\lambda$
if the diameters of the sets converge to zero. The associated safe screening rules
are referred to as converging safe tests.
\end{definition}

Not only converging safe regions are crucial to speed up
computation, but they are also helpful to reach exact active set
identification in a finite number of steps.
More precisely, we prove that one recovers the equicorrelation set of the Lasso
(\lcf Remark~\ref{rem:equicorrelation_set}) in finite time with any converging
strategy: after a finite number of steps, the
equicorrelation set $\mathcal{E}_{\lambda}$ is exactly identified.
Such a property is sometimes referred to as finite identification of the
support \cite{Liang_Fadili_Peyere14}. This is summarized in the following.

\begin{theorem}\label{th:converging_safe}
Let $(\mathcal{C}_{k})_{k \in \bbN}$ be a sequence of converging safe
regions. The estimated support provided by $\mathcal{C}_{k}$,
$A^{(\lambda)}(\mathcal{C}_k)=\{ j \in [p] : \max_{\theta \in
\mathcal{C}_{k}} |\theta^\top x_j|  \geq 1 \}$,
satisfies  $\lim_{k \to \infty} A^{(\lambda)} (\mathcal{C}_k)
 = \mathcal{E}_{\lambda}$, and there exists $k_0 \in \bbN$ such that $\forall k
\geq k_0$ one gets $ A^{(\lambda)} (\mathcal{C}_k)
= \mathcal{E}_{\lambda}$.
\end{theorem}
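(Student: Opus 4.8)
The plan is to combine two elementary facts: that $\ttheta{\lambda}$ lies in \emph{every} $\mathcal{C}_k$, which makes one of the two inclusions free for all $k$; and that the vanishing diameters squeeze the quantities $\mu_{\mathcal{C}_k}(x_j) := \max_{\theta \in \mathcal{C}_k}|x_j^\top\theta|$ down onto $|x_j^\top\ttheta{\lambda}|$, uniformly over the finite index set $[p]$.

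First I would record that $\mathcal{E}_\lambda \subseteq A^{(\lambda)}(\mathcal{C}_k)$ for every $k$. Since $\ttheta{\lambda}\in\mathcal{C}_k$, we have $\mu_{\mathcal{C}_k}(x_j) \geq |x_j^\top\ttheta{\lambda}|$ for each $j$, so $j \in \mathcal{E}_\lambda$, i.e.\ $|x_j^\top\ttheta{\lambda}| = 1$, immediately gives $\mu_{\mathcal{C}_k}(x_j) \geq 1$, hence $j \in A^{(\lambda)}(\mathcal{C}_k)$.

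Next comes the reverse inclusion for $k$ large. Fix $j \notin \mathcal{E}_\lambda$. Because $\ttheta{\lambda}\in\dual$ we have $|x_j^\top\ttheta{\lambda}| \leq 1$, and $j\notin\mathcal{E}_\lambda$ upgrades this to $|x_j^\top\ttheta{\lambda}| < 1$; put $\varepsilon_j := 1 - |x_j^\top\ttheta{\lambda}| > 0$. For any $\theta\in\mathcal{C}_k$, the triangle inequality and Cauchy--Schwarz, together with $\ttheta{\lambda}\in\mathcal{C}_k$, give $|x_j^\top\theta| \leq |x_j^\top\ttheta{\lambda}| + \|x_j\|\,\|\theta-\ttheta{\lambda}\| \leq |x_j^\top\ttheta{\lambda}| + \|x_j\|\,\mathrm{diam}(\mathcal{C}_k)$; taking the supremum over $\theta\in\mathcal{C}_k$ yields $\mu_{\mathcal{C}_k}(x_j) \leq |x_j^\top\ttheta{\lambda}| + \|x_j\|\,\mathrm{diam}(\mathcal{C}_k)$. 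Since $\mathrm{diam}(\mathcal{C}_k)\to 0$, there is $k_j\in\bbN$ with $\|x_j\|\,\mathrm{diam}(\mathcal{C}_k) < \varepsilon_j$ for all $k\geq k_j$ (any $k_j$ works if $x_j = 0$), so $\mu_{\mathcal{C}_k}(x_j) < 1$ and $j\notin A^{(\lambda)}(\mathcal{C}_k)$ as soon as $k\geq k_j$.

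Finally, finiteness of $[p]$ lets me set $k_0 := \max\{k_j : j\in[p]\setminus\mathcal{E}_\lambda\}$, the case $\mathcal{E}_\lambda = [p]$ being trivial since then $A^{(\lambda)}(\mathcal{C}_k) = [p]$ for all $k$ already; combining the two steps, $A^{(\lambda)}(\mathcal{C}_k) = \mathcal{E}_\lambda$ for all $k\geq k_0$. In particular the sequence of subsets is eventually constant, which is precisely what $\lim_{k\to\infty}A^{(\lambda)}(\mathcal{C}_k) = \mathcal{E}_\lambda$ means for subsets of the finite set $[p]$, so both assertions follow at once. There is no genuine analytic difficulty here; the only points to handle with care are passing from the per-coordinate thresholds $k_j$ to a single $k_0$ (which is exactly where finiteness of $[p]$ is used) and making explicit that convergence of a sequence of subsets of $[p]$ is to be read as eventual equality. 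Remark~\ref{rem:equicorrelation_set} is then what relates $\mathcal{E}_\lambda$ to the supports of the Lasso solutions, but it plays no role in the argument itself.
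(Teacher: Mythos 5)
Your proof is correct and follows essentially the same route as the paper's: the inclusion $\mathcal{E}_\lambda \subseteq A^{(\lambda)}(\mathcal{C}_k)$ comes for free from $\ttheta{\lambda}\in\mathcal{C}_k$, and the reverse inclusion for large $k$ comes from Cauchy--Schwarz plus the vanishing diameter. The only cosmetic difference is that you use per-index thresholds $\varepsilon_j, k_j$ and take a maximum over the finite set $[p]$, whereas the paper works with a single uniform margin $1-t$ where $t=\max_{j\notin\mathcal{E}_\lambda}|x_j^\top\ttheta{\lambda}|$; these are the same argument.
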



\begin{proof}
The main idea of the proof is to use that
$\lim_{k\to\infty}\mathcal{C}_k=\{\ttheta{\lambda}\}$,
$\lim_{k\to\infty}\mu_{\mathcal{C}_k}(x)=\mu_{\{\ttheta{\lambda}\}}(x)=|x^\top
\ttheta{\lambda}|$ and that the set $A^{(\lambda)} (\mathcal{C}_k)$ is discrete.
Details are delayed to the Appendix.
\end{proof}

\begin{remark}
A more general result is proved for a specific algorithm (Forward-Backward) in
\citet{Liang_Fadili_Peyere14}.
Interestingly, our scheme is independent of the algorithm considered (\eg
Forward-Backward \cite{Beck_Teboulle09}, Primal Dual \cite{Chambolle_Pock11},
coordinate-descent \cite{Tseng01,Friedman_Hastie_Hofling_Tibshirani07})
and relies only on the convergence of a sequence of safe regions.
\end{remark}

\subsection{GAP SAFE regions: leveraging the duality gap}
In this section, we provide new dynamic safe rules built on converging safe
regions.
\begin{theorem}
\label{thm:finerball}
Let us take any $(\beta, \theta) \in \bbR^{p} \times \dual$.
Denote $\smallR[\beta]:=\frac{1}{\lambda}\big( \norm{y}^2 - \norm{X \beta - y}^2
- 2
\lambda \norm{\beta}_1 \big)_+^{1/2}, \largeR[\theta]:= \norm{\theta -
y/\lambda}, \ttheta{\lambda}$ the dual optimal Lasso solution and
$\bestRtilde[\lambda]{\beta}{
\theta}:=\sqrt{\largeR[\theta]^2-\smallR[\beta]^2}$,
then
\begin{equation}\label{eq:bestR}
 \ttheta{\lambda} \in B\Big(\theta,
\bestRtilde[\lambda]{\beta}{\theta}\Big).
\end{equation}
\end{theorem}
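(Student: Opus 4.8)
The plan is to combine the variational characterization of the dual optimum, $\ttheta{\lambda}=\Pi_{\dual}(y/\lambda)$, with weak duality. First I would write down the obtuse-angle characterization of the Euclidean projection onto the closed convex set $\dual$: since $\theta\in\dual$,
\[
\Big\langle \tfrac{y}{\lambda}-\ttheta{\lambda},\ \theta-\ttheta{\lambda}\Big\rangle \le 0 .
\]
Expanding $\norm{\tfrac{y}{\lambda}-\theta}^2 = \norm{\tfrac{y}{\lambda}-\ttheta{\lambda}}^2 + \norm{\ttheta{\lambda}-\theta}^2 + 2\big\langle \tfrac{y}{\lambda}-\ttheta{\lambda},\ \ttheta{\lambda}-\theta\big\rangle$ and dropping the cross term (which is $\ge 0$ by the previous display) yields the key Pythagorean-type inequality
\[
\norm{\ttheta{\lambda}-\theta}^2 \ \le\ \norm{\tfrac{y}{\lambda}-\theta}^2 - \norm{\tfrac{y}{\lambda}-\ttheta{\lambda}}^2 \ =\ \largeR[\theta]^2 - \largeR[\ttheta{\lambda}]^2 .
\]
So it suffices to show $\largeR[\ttheta{\lambda}]^2 \ge \smallR[\beta]^2$; note this will simultaneously prove that $\bestRtilde[\lambda]{\beta}{\theta}$ is a well-defined real number, since the right-hand side above is $\ge 0$.

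For that lower bound I would use weak duality. By definition of $D_\lambda$ one has $\tfrac{\lambda^2}{2}\largeR[\ttheta{\lambda}]^2 = \tfrac12\norm{y}^2 - D_\lambda(\ttheta{\lambda})$, and weak duality applied to the feasible pair $(\beta,\ttheta{\lambda})$ gives $D_\lambda(\ttheta{\lambda}) \le P_\lambda(\beta)$. Hence
\[
\tfrac{\lambda^2}{2}\largeR[\ttheta{\lambda}]^2 \ \ge\ \tfrac12\norm{y}^2 - P_\lambda(\beta) \ =\ \tfrac12\big(\norm{y}^2 - \norm{X\beta-y}^2 - 2\lambda\norm{\beta}_1\big).
\]
Since the left-hand side is also nonnegative, I may replace the right-hand side by its positive part, obtaining exactly $\largeR[\ttheta{\lambda}]^2 \ge \tfrac{1}{\lambda^2}\big(\norm{y}^2 - \norm{X\beta-y}^2 - 2\lambda\norm{\beta}_1\big)_+ = \smallR[\beta]^2$.

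Chaining the two displays and taking square roots (both sides nonnegative) gives $\norm{\ttheta{\lambda}-\theta} \le \sqrt{\largeR[\theta]^2 - \smallR[\beta]^2} = \bestRtilde[\lambda]{\beta}{\theta}$, which is the claim. There is no genuinely hard step here: the argument is essentially a one-line projection inequality plus weak duality. The only points that need care are (i) checking the sign of the cross term so that the Pythagorean inequality points the right way, and (ii) correctly threading the positive part through the definition of $\smallR[\beta]$ — harmless precisely because $\largeR[\ttheta{\lambda}]^2\ge 0$. I would also remark that, since $\tfrac{\lambda^2}{2}\largeR[\ttheta{\lambda}]^2 = \tfrac12\norm{y}^2 - D_\lambda(\ttheta{\lambda})$ and $P_\lambda(\beta) - D_\lambda(\theta)$ is the duality gap, the very same computation shows $\bestRtilde[\lambda]{\beta}{\theta} \le \tfrac{1}{\lambda}\sqrt{2\big(P_\lambda(\beta)-D_\lambda(\theta)\big)}$, so the ball of Theorem~\ref{thm:finerball} is at least as tight as the gap-based GAP SAFE sphere of Table~\ref{table:spheres}.
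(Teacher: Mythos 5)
Your proof is correct and rests on the same two ingredients as the paper's: weak duality, which yields $\norm{\ttheta{\lambda}-y/\lambda}\geq \smallR[\beta]$, and the characterization $\ttheta{\lambda}=\Pi_{\dual}(y/\lambda)$, which controls $\norm{\ttheta{\lambda}-\theta}$ in terms of $\largeR[\theta]$. The difference lies in how the projection is exploited. The paper argues geometrically: it places $\ttheta{\lambda}$ in the annulus $\{z:\smallR[\beta]\leq\norm{z-y/\lambda}\leq\largeR[\theta]\}$, observes that the segment $[\theta,\ttheta{\lambda}]$ stays in this annulus by convexity of $\dual\cap B(y/\lambda,\largeR[\theta])$, and then appeals to a tangency picture (the ``farthest'' admissible position of $\ttheta{\lambda}$) to read off $\norm{\theta-\ttheta{\lambda}}^2\leq\largeR[\theta]^2-\smallR[\beta]^2$ by Pythagoras. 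You instead invoke the variational inequality of the projection, $(y/\lambda-\ttheta{\lambda})^\top(\theta-\ttheta{\lambda})\leq 0$, and expand the square, which gives the slightly stronger intermediate bound $\norm{\theta-\ttheta{\lambda}}^2\leq\largeR[\theta]^2-\largeR[\ttheta{\lambda}]^2$ before weakening $\largeR[\ttheta{\lambda}]^2\geq\smallR[\beta]^2$ via weak duality applied at the pair $(\beta,\ttheta{\lambda})$. This buys you two things the paper leaves implicit: a self-contained verification that $\largeR[\theta]^2-\smallR[\beta]^2\geq 0$, so the radius is well defined, and a fully algebraic replacement for the ``farthest point / tangency'' step, which in the paper is really a picture rather than a proof. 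Your closing observation that the same computation gives $\bestRtilde[\lambda]{\beta}{\theta}\leq\bestR[\lambda]{\beta}{\theta}$ is also correct and recovers Proposition~\ref{prop:dualgapisradius}.
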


\begin{proof}

The construction of the ball $B(\theta,
\bestRtilde[\lambda]{\beta}{\theta})$ is based on the weak duality theorem (\lcf
\cite{Rockafellar_Wets98} for a reminder on weak and strong duality).
Fix $\theta\in \dual$ and $\beta\in \bbR^p$, then it holds that
\begin{equation*}
\frac{1}{2}\norm{y}^2 - \frac{\lambda^2}{2}\norm{\theta -
\frac{y}{\lambda}}^2 \leq \frac{1}{2}\norm{X \beta - y}^2 + \lambda
\norm{\beta}_1 .
\end{equation*}
Hence,
\begin{equation}\label{ineq:smallR_main}
\norm{\theta-\frac{y}{\lambda}}
\geq \! \frac{  \sqrt{\left(\norm{y}^2 - \norm{X \beta
- y}^2 - 2 \lambda \norm{\beta}_1\right)_+} }{\lambda}.
\end{equation}
In particular, this provides $\|\ttheta{\lambda}-y/\lambda\| \geq
\smallR[\beta]$.
Combining \eqref{ineq:largeR} and \eqref{ineq:smallR_main}, asserts
that $\ttheta{\lambda}$
belongs to the annulus
$A(y/\lambda, \largeR[\theta], \smallR[\beta]) :=
\{z \in \bbR^n : \smallR[\beta] \leq \|z-y/\lambda\| \leq \largeR[\theta] \}$
(the light blue zone in Figure~\ref{fig:couronne1}).

Remind that the
dual feasible set $\dual$ is convex, hence
$\dual \cap B(y/\lambda, \largeR[\theta])$ is also convex.
Thanks to \eqref{ineq:smallR_main}, $\dual \cap B(y/\lambda, \largeR[\theta]) =
\dual \cap A(y/\lambda, \largeR[\theta], \smallR[\beta])$,
and then $\dual \cap A(y/\lambda, \largeR[\theta], \smallR[\beta])$ is convex
too. Hence, $\ttheta{\lambda}$ is inside the annulus $A(y/\lambda,
\largeR[\theta], \smallR[\beta])$ and so is
$[\theta, \ttheta{\lambda}] \subseteq A(y/\lambda, \largeR[\theta],
\smallR[\beta])$ by convexity (see Figure~\ref{fig:couronne1},(a) and
Figure~\ref{fig:couronne1},(b)). Moreover,
$\ttheta{\lambda}$ is the point of $[\theta, \ttheta{\lambda}]$ which is closest
to $y/\lambda$. The farthest  where $\ttheta{\lambda}$ can be according to this
information would be if $[\theta, \ttheta{\lambda}]$
were tangent to the inner ball $B(y/\lambda, \smallR[\beta])$ and
$\|\ttheta{\lambda}-y/\lambda\|=\smallR[\beta]$.
Let us denote $\thetaint$ such a point.
The tangency property reads $\norm{\thetaint - y/\lambda}=\smallR[\beta]$ and
$(\theta-\thetaint)^\top (y/\lambda - \thetaint)= 0$. Hence, with the
later and the definition of $\largeR[\theta]$,
$\norm{\theta-y/\lambda}^2=\norm{\theta-\thetaint}^2 +
\norm{\thetaint-y/\lambda}^2
$and
$\norm{\theta-\thetaint}^2 =\largeR[\theta]^2-\smallR[\beta]^2
$.

Since by construction $\ttheta{\lambda}$ cannot be further away from
$\theta$ than $\thetaint$ (again, insights can be
gleaned from Figure~\ref{fig:couronne1}), we conclude that
$\ttheta{\lambda} \in B\big(\theta,
(\largeR[\theta]^2-\smallR[\beta]^2)^{1/2}\big)$.
\end{proof}

\begin{remark}
Choosing $\beta=0$ and $\theta= y/\lambda_{\max}$, then one recovers the static safe rule given in Example \ref{ex:elghaoui}.
\end{remark}

With the definition of the primal (resp. dual) objective for
$P_\lambda(\beta)$,
(resp. $D_\lambda(\theta)$), the duality gap reads as
$G_{\lambda}(\beta,\theta)=P_\lambda(\beta)-D_\lambda\left(\theta \right)$.
Remind that if $G_\lambda (\beta,\theta) \leq \epsilon$, then one has
$P_\lambda(\beta)-P_\lambda(\tbeta{\lambda})\leq \epsilon$, which is a
standard stopping criterion for Lasso solvers.
The next proposition establishes a connection between the radius
$\bestR[\lambda]{\beta}{\theta}$ and the duality gap $G_\lambda (\beta,\theta)$.
\begin{proposition}\label{prop:dualgapisradius}
For any $(\beta, \theta) \in \bbR^{p} \times \dual$, the following holds
\begin{equation}
\bestRtilde[\lambda]{\beta}{\theta}^2 \leq
\bestR[\lambda]{\beta}{\theta}^2 := \frac{2}{\lambda^2} G_\lambda
(\beta,\theta).
\end{equation}
\end{proposition}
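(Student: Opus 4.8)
The plan is to prove the inequality by a direct computation. I would expand both $\bestR[\lambda]{\beta}{\theta}^2$ and $\bestRtilde[\lambda]{\beta}{\theta}^2$ in closed form, notice that they share the common term $\largeR[\theta]^2 = \norm{\theta - y/\lambda}^2$, and check that, once this term is cancelled, the claim reduces to the elementary fact that the positive part of a real number dominates that number.

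Concretely, unfolding the right-hand side with the definitions of $P_\lambda$ and $D_\lambda$ from \eqref{eq:Lasso} and \eqref{eq:dual_problem} gives
\begin{equation*}
\bestR[\lambda]{\beta}{\theta}^2 = \frac{2}{\lambda^2}\bigl(P_\lambda(\beta) - D_\lambda(\theta)\bigr) = \largeR[\theta]^2 - \frac{1}{\lambda^2}\bigl(\norm{y}^2 - \norm{X\beta - y}^2 - 2\lambda\norm{\beta}_1\bigr),
\end{equation*}
whereas, by Theorem~\ref{thm:finerball} and the definition of $\smallR[\beta]$,
\begin{equation*}
\bestRtilde[\lambda]{\beta}{\theta}^2 = \largeR[\theta]^2 - \smallR[\beta]^2 = \largeR[\theta]^2 - \frac{1}{\lambda^2}\bigl(\norm{y}^2 - \norm{X\beta - y}^2 - 2\lambda\norm{\beta}_1\bigr)_+ .
\end{equation*}
Writing $t := \norm{y}^2 - \norm{X\beta - y}^2 - 2\lambda\norm{\beta}_1$ and subtracting yields $\bestR[\lambda]{\beta}{\theta}^2 - \bestRtilde[\lambda]{\beta}{\theta}^2 = \frac{1}{\lambda^2}\bigl((t)_+ - t\bigr) \geq 0$, since $(t)_+ = \max(0,t) \geq t$ for every $t \in \bbR$, which is exactly the desired inequality.

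I do not expect a genuine obstacle: the statement is an algebraic identity up to the inequality $(t)_+ \geq t$, and the only step deserving attention is the role of the truncation $(\cdot)_+$ in the definition of $\smallR[\beta]$ — it is precisely this truncation that converts the identity into an inequality, the discrepancy between the two squared radii being $\max(0,-t)/\lambda^2$, which vanishes exactly when $\norm{y}^2 \geq \norm{X\beta - y}^2 + 2\lambda\norm{\beta}_1$. One may also record that weak duality, $G_\lambda(\beta,\theta) \geq 0$, ensures $\bestR[\lambda]{\beta}{\theta}^2 \geq 0$, so that $\bestR[\lambda]{\beta}{\theta}$ is well defined and the associated sphere test is never vacuous.
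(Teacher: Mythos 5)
Your proof is correct and follows essentially the same route as the paper: the paper's (very terse) proof likewise reduces the claim to $\largeR[\theta]^2 = \norm{\theta - y/\lambda}^2$ together with $\smallR[\beta]^2 \geq \bigl(\norm{y}^2 - \norm{X\beta - y}^2 - 2\lambda\norm{\beta}_1\bigr)/\lambda^2$, which is exactly your observation that $(t)_+ \geq t$ after cancelling the common term. Your explicit expansion of $\tfrac{2}{\lambda^2}G_\lambda(\beta,\theta)$ and the remark on when the inequality is an equality are just a more detailed write-up of the same argument.
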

\begin{proof}
Use the fact that
$\largeR[\theta]^2=  \norm{\theta - y/\lambda}^2$ and
$\smallR[\beta]^2 \geq  (\norm{y}^2- \norm{X \beta -y}^2 -
2\lambda\norm{\beta}_1)/\lambda^2$.
\end{proof}


If we could choose  the ``oracle'' $\theta=\ttheta{\lambda}$ and
$\beta=\tbeta{\lambda}$  in \eqref{eq:bestR} then
we would obtain a zero radius. Since those quantities are unknown, we rather
pick dynamically the current available estimates given by an optimization
algorithm:
$\beta=\beta_k$ and $\theta=\theta_k$ as in Eq.~\eqref{eq:thetak}.
Introducing GAP SAFE spheres and domes as below,
Proposition~\ref{prop:convergence} ensures that they are converging
safe regions.

\textbf{GAP SAFE sphere}:
\begin{equation}\label{eq:GAP_SAFE_SPHERE}
\mathcal{C}_k=
B\left(\theta_k,\bestR[\lambda]{\beta}{\theta}\right).
\end{equation}
\textbf{GAP SAFE dome}:
\begin{equation}\label{eq:GAP_SAFE_DOME}
\mathcal{C}_k=
D\!\!\left(\frac{\frac{y}{\lambda}+\theta_k}{2},\frac{\largeR[\theta_k]}{2},
2\left(\!\frac
{
\smallR[\beta_k]}{\largeR[\theta_k]} \right)^2\!\!\!-1,
\frac{\theta_k-\frac{y}{\lambda}}{\|\theta_k-\frac{y}{\lambda}\|}\right).
\end{equation}
\begin{proposition}\label{prop:convergence}
For any converging primal sequence $(\beta_k)_{k \in \bbN}$, and dual sequence
$(\theta_k)_{k \in \bbN}$
defined as in Eq.~\eqref{eq:thetak}, then the GAP SAFE sphere and the GAP SAFE
dome are converging safe regions.
\end{proposition}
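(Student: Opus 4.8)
The plan is to check, for each of the two families, the three defining properties of a converging sequence of safe regions: that every $\mathcal{C}_k$ is closed and convex, that $\ttheta{\lambda}\in\mathcal{C}_k$ for all $k$, and that the diameters of the $\mathcal{C}_k$ tend to $0$. Closed convexity is free in both cases (a ball; a ball intersected with a half-space). For the sphere, safety and vanishing diameter will follow by assembling Theorem~\ref{thm:finerball}, Proposition~\ref{prop:dualgapisradius}, Remark~\ref{rem:convergence} and strong duality. For the dome I would reduce safety to the geometric picture already set up in the proof of Theorem~\ref{thm:finerball}, and deduce the vanishing diameter from the inclusion of the dome in the sphere.

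\textbf{GAP SAFE sphere.} By Theorem~\ref{thm:finerball}, $\ttheta{\lambda}\in B\big(\theta_k,\bestRtilde[\lambda]{\beta_k}{\theta_k}\big)$, and by Proposition~\ref{prop:dualgapisradius}, $\bestRtilde[\lambda]{\beta_k}{\theta_k}\le\bestR[\lambda]{\beta_k}{\theta_k}$, so $\ttheta{\lambda}\in B\big(\theta_k,\bestR[\lambda]{\beta_k}{\theta_k}\big)=\mathcal{C}_k$. The diameter is $2\,\bestR[\lambda]{\beta_k}{\theta_k}=\frac{2}{\lambda}\sqrt{2\,G_\lambda(\beta_k,\theta_k)}$, so it suffices to show $G_\lambda(\beta_k,\theta_k)=P_\lambda(\beta_k)-D_\lambda(\theta_k)\to 0$. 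Since $P_\lambda$ and $D_\lambda$ are continuous and $\beta_k\to\tbeta{\lambda}$, we get $P_\lambda(\beta_k)\to P_\lambda(\tbeta{\lambda})$; by Remark~\ref{rem:convergence}, $\theta_k\to\ttheta{\lambda}$, hence $D_\lambda(\theta_k)\to D_\lambda(\ttheta{\lambda})$; and by strong duality $P_\lambda(\tbeta{\lambda})=D_\lambda(\ttheta{\lambda})$. Therefore $G_\lambda(\beta_k,\theta_k)\to 0$ and the diameters vanish.

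\textbf{GAP SAFE dome.} Being a ball intersected with a half-space, $\mathcal{C}_k$ is closed and convex. For safety I would use the two facts already exploited in the proof of Theorem~\ref{thm:finerball}. First, since $\ttheta{\lambda}=\Pi_{\dual}(y/\lambda)$ and $\theta_k\in\dual$, the projection inequality $(\ttheta{\lambda}-\tfrac{y}{\lambda})^\top(\ttheta{\lambda}-\theta_k)\le 0$ holds, which says exactly that $\ttheta{\lambda}$ lies in the ball of diameter $[\tfrac{y}{\lambda},\theta_k]$, that is in $B\big(\tfrac{y/\lambda+\theta_k}{2},\tfrac{\largeR[\theta_k]}{2}\big)$; expanding it gives $(\theta_k-\tfrac{y}{\lambda})^\top(\ttheta{\lambda}-\tfrac{y}{\lambda})\ge\norm{\ttheta{\lambda}-\tfrac{y}{\lambda}}^2$. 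Second, weak duality gives $\norm{\ttheta{\lambda}-\tfrac{y}{\lambda}}\ge\smallR[\beta_k]$. Combining the two, $(\theta_k-\tfrac{y}{\lambda})^\top\big(\ttheta{\lambda}-\tfrac{y/\lambda+\theta_k}{2}\big)\ge\smallR[\beta_k]^2-\tfrac12\largeR[\theta_k]^2$, which is precisely the half-space inequality appearing in the dome \eqref{eq:GAP_SAFE_DOME} with normal $\tfrac{\theta_k-y/\lambda}{\norm{\theta_k-y/\lambda}}$ and parameter $2(\smallR[\beta_k]/\largeR[\theta_k])^2-1$; equivalently, \eqref{eq:GAP_SAFE_DOME} is the convex hull of the ``crescent'' $\{z:(z-\tfrac{y}{\lambda})^\top(z-\theta_k)\le 0,\ \norm{z-\tfrac{y}{\lambda}}\ge\smallR[\beta_k]\}$ of Figure~\ref{fig:couronne1}, which contains $\ttheta{\lambda}$. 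For the diameter, expanding $\norm{z-\theta_k}^2$ for any $z$ in the dome using $\norm{z-\tfrac{y/\lambda+\theta_k}{2}}\le\tfrac12\largeR[\theta_k]$ and the half-space bound just derived yields $\norm{z-\theta_k}^2\le\largeR[\theta_k]^2-\smallR[\beta_k]^2=\bestRtilde[\lambda]{\beta_k}{\theta_k}^2\le\bestR[\lambda]{\beta_k}{\theta_k}^2$, so the dome is contained in the GAP SAFE sphere and its diameter tends to $0$ by the previous step.

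The only genuinely delicate point is this dome/crescent identity: translating ``$\ttheta{\lambda}$ sits in the ball of diameter $[\tfrac{y}{\lambda},\theta_k]$ and outside $B(\tfrac{y}{\lambda},\smallR[\beta_k])$'' into the $(\domecenter,\domeradius,\domeprop,\domenormal)$ parametrization of $D(\cdot,\cdot,\cdot,\cdot)$, and checking that the resulting dome is exactly the convex hull of the crescent rather than merely a superset --- the crescent being a ball with a smaller ball tangent to its boundary at $\tfrac{y}{\lambda}$ removed, whose convex hull is that ball truncated by the hyperplane through $\partial B(\tfrac{y}{\lambda},\smallR[\beta_k])\cap\partial B\big(\tfrac{y/\lambda+\theta_k}{2},\tfrac{\largeR[\theta_k]}{2}\big)$ --- an elementary but sign-sensitive computation. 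One also notes that $\largeR[\theta_k]$ is bounded below by the positive distance from $y/\lambda$ to $\dual$, and $\smallR[\beta_k]\le\largeR[\theta_k]$ by weak duality, so the dome is well defined for $\lambda<\lambda_{\max}$. Everything else is a direct assembly of Theorem~\ref{thm:finerball}, Proposition~\ref{prop:dualgapisradius} and Remark~\ref{rem:convergence}.
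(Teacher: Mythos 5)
Your proof is correct and follows essentially the paper's own route: the sphere part is exactly the paper's argument (safety from Theorem~\ref{thm:finerball} and Proposition~\ref{prop:dualgapisradius}, vanishing radius from Remark~\ref{rem:convergence} and strong duality), and for the dome you simply supply the details behind the paper's ``one can check that it is included in the GAP SAFE sphere.'' Your explicit half-space derivation $(\theta_k-\tfrac{y}{\lambda})^\top\big(\ttheta{\lambda}-\tfrac{y/\lambda+\theta_k}{2}\big)\ge \smallR[\beta_k]^2-\tfrac12\largeR[\theta_k]^2$ and the resulting bound $\norm{z-\theta_k}^2\le\largeR[\theta_k]^2-\smallR[\beta_k]^2$ both check out, and your caution about the sign convention is warranted: the half-space you obtain agrees with the dome parameters listed in the paper's appendix, whose normal vector is the negative of the one written in Eq.~\eqref{eq:GAP_SAFE_DOME}, so you have in fact pinned down the geometrically correct (convex hull of the crescent) version of the dome.
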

\begin{proof}
For the GAP SAFE sphere the result follows from strong duality,
Remark~\ref{rem:convergence} and Proposition~\ref{prop:dualgapisradius} yield
$\lim_{k \to \infty}\bestR{\beta_k}{\theta_k}=0$, since $\lim_{k \to
\infty}\theta_k= \ttheta{\lambda}$ and $\lim_{k \to \infty}\beta_k=
\tbeta{\lambda}$. For the GAP SAFE dome, one can check that it
is included in the GAP SAFE sphere, therefore inherits the convergence (see
also Figure \ref{fig:couronne1},(c) and (d)).
\end{proof}

\begin{remark}\label{rem:ST3}
The radius $\bestR{\beta_k}{\theta_k}$ can be compared with the radius
considered for the Dynamic Safe  rule and Dynamic ST3
\cite{Bonnefoy_Emiya_Ralaivola_Gribonval14} respectively:
$\largeR[\theta_k]=\|\theta_k-y/\lambda\|^2$ and
$(\largeR[\theta_k]^2-\delta^2)^{1/2}$, where
$\delta=(\lambda_{\max}/\lambda-1)/\norm{x_{j^\star}}$.
We have proved that $\lim_{k\to \infty}\bestR{\beta_k}{\theta_k}=0$, but
a weaker property is satisfied by the two other radius:
$\lim_{k\to \infty}\largeR[\theta_k]= \largeR[\ttheta{\lambda}] =
\|\largeR[\ttheta{\lambda}]-y/\lambda\|^2$ and
$\lim_{k \to
\infty}(\largeR[\theta_k]^2-\delta^2)^{1/2}=(\largeR[\ttheta{\lambda}]
^2-\delta^2)^{1/2}>0$.
\end{remark}


\subsection{GAP SAFE rules : sequential for
free}\label{sec:sequential_safe_rules}

As a byproduct, our dynamic screening tests provide a warm
start strategy for the safe regions, making our  GAP SAFE rules inherently
sequential. The next proposition shows their efficiency when attacking a new
tuning parameter, after having solved the Lasso for a previous $\lambda$, even
only approximately. Handling approximate solutions is a critical issue to
produce safe sequential strategies: without taking into account the
approximation error, the screening might disregard relevant variables,
especially the one near the safe regions boundaries. Except for
$\lambda_{\max}$, it is unrealistic to assume that one can dispose of exact
solutions.

Consider $\lambda_{0}=\lambda_{\max}$ and a non-increasing sequence of
$T-1$ tuning parameters $(\lambda_t)_{t \in [T-1]}$ in $(0,\lambda_{\max})$.
In practice, we choose the common grid \cite{Buhlmann_vandeGeer11}[2.12.1]):
$\lambda_t=\lambda_{0} 10^{-\delta t/(T-1)}$ (for instance in
Figure~\ref{fig:screening_proportion}, we considered $\delta=3$).
The next result controls how the duality gap, or
equivalently, the diameter of our GAP SAFE regions, evolves from $\lambda_{t-1}$
to $\lambda_{t}$.

\begin{proposition}\label{prop:sequential}
Suppose that $t\geq 1$ and $(\beta,\theta) \in \bbR^p \times \dual$.
Reminding $\bestRsqr{\beta}{\theta}=2G_{\lambda_t} (\beta,\theta)/\lambda_t^2 $,
the following holds
\begin{align}
\bestRsqr{\beta}{\theta}&=
\left(\frac{\lambda_{t-1}}{\lambda_t} \right)
\bestRsqr[\lambda_{t-1}]{\beta}{\theta}\\
&+(1-\frac{\lambda_{t}}{\lambda_{t-1}})
 \norm{\frac{X \beta -y}{\lambda_t}}^2  - (\frac{\lambda_{t-1}}{\lambda_{t}}-1
)\norm{\theta}^2\nonumber .
\end{align}
\end{proposition}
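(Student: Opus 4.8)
The plan is to make the dependence of the duality gap on the tuning parameter completely explicit, after which the identity falls out of a one-line rescaling argument. First I would expand $G_\lambda(\beta,\theta)=P_\lambda(\beta)-D_\lambda(\theta)$ using the definitions in \eqref{eq:Lasso} and \eqref{eq:dual_problem}. Writing $\frac{\lambda^2}{2}\norm{\theta-y/\lambda}^2=\frac{\lambda^2}{2}\norm{\theta}^2-\lambda y^\top\theta+\frac12\norm{y}^2$, the two $\frac12\norm{y}^2$ terms cancel and one gets
\begin{equation*}
G_\lambda(\beta,\theta)=\tfrac12\norm{X\beta-y}^2+\lambda\norm{\beta}_1+\tfrac{\lambda^2}{2}\norm{\theta}^2-\lambda y^\top\theta .
\end{equation*}
Dividing by $\lambda^2/2$ then yields the explicit form
\begin{equation*}
\bestRsqr[\lambda]{\beta}{\theta}=\frac{1}{\lambda^2}\norm{X\beta-y}^2+\frac{2}{\lambda}\bigl(\norm{\beta}_1-y^\top\theta\bigr)+\norm{\theta}^2 .
\end{equation*}

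The key observation is that the three summands depend on $(\beta,\theta)$ only through the $\lambda$-free quantities $\norm{X\beta-y}^2$, $\norm{\beta}_1-y^\top\theta$ and $\norm{\theta}^2$, and that as functions of $\lambda$ they are homogeneous of degree $-2$, $-1$ and $0$ respectively. Multiplying the $\lambda_{t-1}$-instance of the displayed formula by $\lambda_{t-1}/\lambda_t$ therefore sends the degree-$(-1)$ middle term $\frac{2}{\lambda_{t-1}}(\norm{\beta}_1-y^\top\theta)$ onto exactly its $\lambda_t$-counterpart $\frac{2}{\lambda_t}(\norm{\beta}_1-y^\top\theta)$. Subtracting from $\bestRsqr[\lambda_t]{\beta}{\theta}$ thus annihilates the $\ell_1$/linear cross-term and leaves
\begin{equation*}
\bestRsqr[\lambda_t]{\beta}{\theta}-\frac{\lambda_{t-1}}{\lambda_t}\bestRsqr[\lambda_{t-1}]{\beta}{\theta}=\Bigl(\frac{1}{\lambda_t^2}-\frac{1}{\lambda_t\lambda_{t-1}}\Bigr)\norm{X\beta-y}^2+\Bigl(1-\frac{\lambda_{t-1}}{\lambda_t}\Bigr)\norm{\theta}^2 .
\end{equation*}
Finally I would rewrite $\frac{1}{\lambda_t^2}-\frac{1}{\lambda_t\lambda_{t-1}}=\frac{1}{\lambda_t^2}\bigl(1-\frac{\lambda_t}{\lambda_{t-1}}\bigr)$, use $\frac{1}{\lambda_t^2}\norm{X\beta-y}^2=\norm{(X\beta-y)/\lambda_t}^2$, and note $1-\frac{\lambda_{t-1}}{\lambda_t}=-\bigl(\frac{\lambda_{t-1}}{\lambda_t}-1\bigr)$, which is precisely the claimed identity.

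There is essentially no genuine obstacle here beyond careful bookkeeping of the powers of $\lambda$; the only step that deserves a remark is the cancellation of the cross-term, which works exactly because it is the unique summand scaling like $\lambda^{-1}$ and hence the unique one rescaled by the factor $\lambda_{t-1}/\lambda_t$ onto its $\lambda_t$-analogue. Note also that no positive-part truncation intervenes: $\bestRsqr[\lambda]{\beta}{\theta}$ is defined directly as $2G_\lambda(\beta,\theta)/\lambda^2$, and weak duality already ensures $G_\lambda(\beta,\theta)\ge 0$, so the identity is valid for every $(\beta,\theta)\in\bbR^p\times\dual$ with no feasibility-related caveat.
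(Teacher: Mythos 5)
Your identity and its derivation are correct, and the argument reaches the paper's conclusion by a cleaner route. The paper's appendix proof eliminates $\norm{\beta}_1$ by solving for it from the definition of $G_{\lambda_{t-1}}(\beta,\theta)$, substitutes into $G_{\lambda_t}(\beta,\theta)$, and then grinds through the expansion of $\norm{\lambda_t\theta-y}^2$ versus $\norm{\lambda_{t-1}\theta-y}^2$ together with a polarization identity for the cross dot-product before the terms collapse. You instead expand the gap once and for all as $\bestRsqr[\lambda]{\beta}{\theta}=\frac{1}{\lambda^2}\norm{X\beta-y}^2+\frac{2}{\lambda}\bigl(\norm{\beta}_1-y^\top\theta\bigr)+\norm{\theta}^2$, a sum of monomials in $1/\lambda$ of degrees $2$, $1$ and $0$ with $\lambda$-free coefficients, so that the multiplier $\lambda_{t-1}/\lambda_t$ is visibly the unique one sending the degree-one term onto its $\lambda_t$-counterpart; the remaining two terms then read off the claimed coefficients directly. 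The two computations are of course the same identity verified by algebra, but your organization explains \emph{why} the $\ell_1$ term disappears and why this particular rescaling factor appears, whereas the paper's substitution obscures this. Your closing remarks are also accurate: no positive-part truncation is involved since $\bestRsqr{\beta}{\theta}$ is defined directly as $2G_{\lambda_t}(\beta,\theta)/\lambda_t^2$, and weak duality guarantees nonnegativity for $\theta\in\dual$ without affecting the validity of the identity itself.
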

\begin{proof}
Details are given in the Appendix.
\end{proof}


This proposition motivates to screen sequentially as follows: having
$(\beta, \theta) \in \bbR^p \times \dual$
such that $G_{\lambda_{t-1}} (\beta,\theta) \leq \epsilon$,
then, we can screen using the GAP SAFE sphere with center $\theta$ and radius
$\bestR{\beta}{\theta}
$. The adaptation to the GAP SAFE dome is straightforward and consists
in replacing $\theta_k,\beta_k,\lambda$ by $\theta,\beta,\lambda_t$ in the GAP
SAFE dome definition.

\begin{remark}
The basic sphere test of~\cite{Wang_Zhou_Wonka_Ye13} requires the
exact dual solution $\theta=\ttheta{\lambda_{t-1}}$ for center, and has radius
$|1/\lambda_{t}- 1/\lambda_{t-1}|\norm{y}$,
which is strictly larger than ours. Indeed, if one has access to dual
and primal optimal solutions at $\lambda_{t-1}$, \ie
$(\theta,\beta)=(\ttheta{\lambda_{t-1}},\tbeta{\lambda_{t-1}})$, then
$\bestRsqr[\lambda_{t-1}]{\beta}{\theta}=0$, $\theta=(y-X\beta)/\lambda_{t-1}$
and
\begin{align*}
\bestRsqr{\beta}{\theta} &= \left(
\frac{\lambda_{t-1}^2}{\lambda_t^2} (1-\frac{\lambda_{t}}{\lambda_{ t-1 } } )
- (\frac{\lambda_{t-1}}{\lambda_{t}}-1 ) \right) \norm{\theta}^2,\nonumber\\
&\leq\left(   \frac{1}{\lambda_t} -
\frac{1}{\lambda_{t-1}}\right)^2 \norm{y}^2,
\end{align*}
since  $\|\theta\|\leq \|y\|/\lambda_{t-1}$ for $\theta=\ttheta{\lambda_{t-1}}$.

Note that contrarily to former sequential rules \cite{Wang_Zhou_Wonka_Ye13},
our introduced GAP SAFE rules still work when one has only access to
approximations of $\ttheta{\lambda_{t-1}}$.
\end{remark}



\section{Experiments}\label{sec:experiments}
\subsection{Coordinate Descent}\label{subsec:coordinate_descent}

Screening procedures can be used with any optimization algorithm. We chose
coordinate descent because it is well suited for machine learning tasks,
especially with sparse and/or unstructured design matrix $X$.
Coordinate descent requires to extract efficiently
columns of $X$ which is typically not
easy in signal processing applications where $X$ is commonly
an implicit operator (e.g. Fourier or wavelets).

\renewcommand{\algorithmicloop}{
\textbf{every $f$ passes through the data}}

\begin{algorithm}
\caption{Coordinate descent with GAP SAFE rules}
\begin{algorithmic}
\INPUT{$X, y , \epsilon, K, f , (\lambda_t)_{t \in [T-1]}$}
\STATE{Initialization:
\STATE \quad $\lambda_0=\lambda_{\max}$
\STATE \quad $\beta^{\lambda_0}=0$
}
\FOR{$ t \in [T-1]$}
\STATE $\beta \leftarrow \beta^{\lambda_{t-1}}$ (previous $\epsilon$-solution)
\FOR{$k \in [K]$}
\IF{$ k\mod f = 1 $}
\STATE Compute $\theta$ and $\mathcal{C}$ thanks to \eqref{eq:thetak}
and \eqref{eq:GAP_SAFE_SPHERE} or \eqref{eq:GAP_SAFE_DOME}
\STATE Get $A^{\lambda_t}(\mathcal{C})= \{j \in [p] :
\mu_{\mathcal{C}}(x_j) \geq 1\}$ as in \eqref{eq:active_set}
\IF{$G_{\lambda_t}(\beta,\theta) \leq \epsilon$}
\STATE $\beta^{\lambda_t}\leftarrow \beta$
\STATE {\bf break}
\ENDIF
\ENDIF
\FOR{$j \in A^{\lambda_t}(\mathcal{C})$}
\STATE
$\beta_{j} \leftarrow
\mathrm{ST}\big(\frac{\lambda_t}{\norm{x_j}^2}, \beta_{j}-\frac{
x_j^\top (X \beta - y)}{\norm{x_j}^2}\big)$
\STATE
\# $\mathrm{ST}(u, x) = \sign(x)\left(|x|- u \right)_+$ (soft-threshold)
\ENDFOR
\ENDFOR
\ENDFOR
\OUTPUT{$(\beta^{\lambda_t})_{t\in [T-1]}$}
\end{algorithmic}
\label{alg:cd_screening}
\end{algorithm}

We implemented the screening rules of Table~\ref{table:spheres} based
on the coordinate descent in Scikit-learn~\cite{Pedregosa_etal11}.
This code is written in Python and Cython to generate low level C
code, offering high performance. A low level language is
necessary for this algorithm to scale. Two implementations
were written to work efficiently with both dense data stored
as Fortran ordered arrays and sparse data stored in the
compressed sparse column (CSC) format.
Our pseudo-code is presented in Algorithm~\ref{alg:cd_screening}. In practice,
we perform the dynamic screening
tests every $f=10$ passes through the entire (active) variables. Iterations are
stopped when the duality gap is smaller than the target accuracy.

The naive computation of $\theta_k$ in
\eqref{eq:thetak} involves the computation of $\norm{X^\top
\rho_k}_{\infty}$ ($\rho_k$ being the current residual), which
costs $\mathcal{O}(np)$ operations. This can
be avoided as one knows when using a safe rule
that the index achieving the maximum for this norm is in
$A^{\lambda_t}(\mathcal{C})$. Indeed, by construction $\argmax_{j \in
A^{\lambda_t}(\mathcal{C})} |x_j^\top\theta_k|=\argmax_{j\in[p]}
|x_j^\top\theta_k|=\argmax_{j\in[p]} |x_j^\top\rho_k|$. In practice
the evaluation of the dual gap is therefore not a $\mathcal{O}(np)$ but
$\mathcal{O}(nq)$ where $q$ is the size of $A^{\lambda_t}(\mathcal{C})$.
In other words, using screening also speeds up the evaluation
of the stopping criterion.

We did not compare our method against the strong rules of
\citet{Tibshirani_Bien_Friedman_Hastie_Simon_Tibshirani12}
because they are not safe and therefore need complex post-processing with
parameters to tune. Also we did not compare against the sequential rule of
\citet{Wang_Zhou_Wonka_Ye13} (\eg EDDP) because it requires the exact dual optimal solution
of the previous Lasso problem, which is not available in practice and can
prevent the solver from actually converging: this is a phenomenon we always observed on our experiments.

\subsection{Number of screened variables}


\begin{figure}
\centering
\includegraphics[width=1\linewidth]{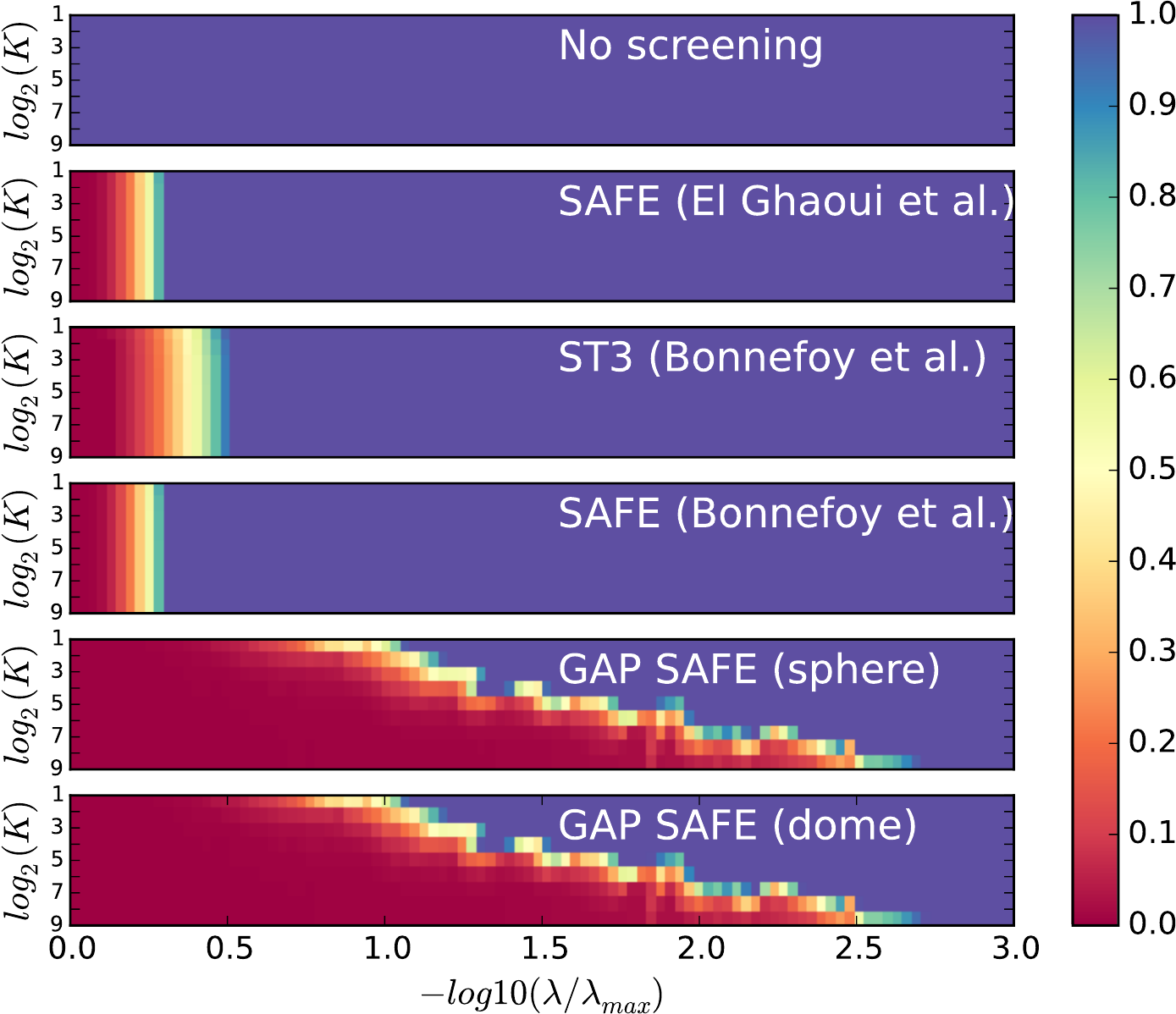}
\caption{Proportion of active variables as a function of $\lambda$ and the
number of iterations $K$ on the Leukemia dataset. Better strategies have longer
range of $\lambda$ with (red) small active sets.}
\label{fig:screening_proportion}
\end{figure}

Figure~\ref{fig:screening_proportion} presents the proportion of variables
screened by several safe rules on the standard Leukemia dataset.
The screening proportion is presented as a function of the number of iterations
$K$. As the SAFE screening rule of \citet{ElGhaoui_Viallon_Rabbani12} is
sequential but not dynamic, for a given $\lambda$ the proportion of screened
variables does not depend on $K$.
The rules of \citet{Bonnefoy_Emiya_Ralaivola_Gribonval14} are more efficient
on this dataset but they do not benefit much from the dynamic framework. 
Our proposed GAP SAFE tests screen much more variables,
especially when the tuning parameter $\lambda$ gets small, which is particularly
relevant in practice. Moreover, even for very small $\lambda$'s
(notice the logarithmic scale) where no variable is screened at the beginning of
the optimization procedure, the GAP SAFE rules manage to screen more
variables, especially when $K$ increases. Finally, the figure demonstrates that
the GAP SAFE dome test only brings marginal improvement over the sphere.


\begin{figure}
\centering
\includegraphics[width=1.00\linewidth]{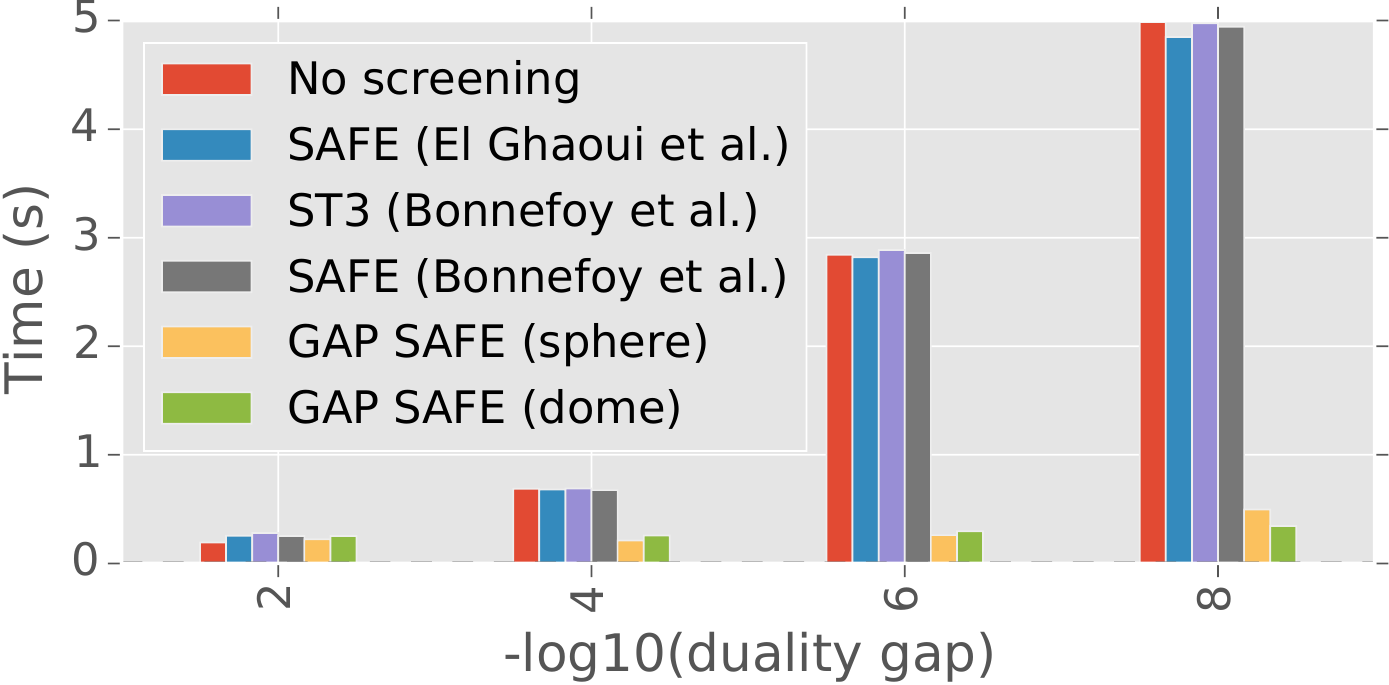}
\caption{Time to reach convergence using various screening
rules on the Leukemia dataset (dense data: $n=72, p=7129$).
}
\label{fig:leukemia_gaps_bars}
\end{figure}

\subsection{Gains in the computation of Lasso paths}

The main interest of variable screening is to reduce computation costs.
Indeed, the time to compute the screening itself should not be larger than
the gains it provides. Hence, we compared the time needed to
compute Lasso paths to prescribed accuracy for different safe rules.
Figures \ref{fig:leukemia_gaps_bars}, \ref{fig:news_gaps_bars}
and~\ref{fig:rcv1_gaps_bars}
illustrate results on three datasets. Figure~\ref{fig:leukemia_gaps_bars} presents
results on the dense, small scale, Leukemia dataset.
Figure~\ref{fig:news_gaps_bars} presents results on a medium scale sparse
dataset obtained with bag of words features extracted from the 20newsgroup dataset (comp.graphics vs. talk.religion.misc with TF-IDF
removing English stop words and words occurring only once or more than 95\% of the time).
Text feature extraction was done using Scikit-Learn.
Figure~\ref{fig:rcv1_gaps_bars}
focuses on the large scale sparse RCV1 (Reuters Corpus Volume 1) dataset, \lcf \cite{Schmidt_LeRoux_Bach13}.

In all cases, Lasso paths are computed as required to estimate
optimal regularization parameters in practice (when using cross-validation
one path is computed for each fold). For each Lasso path, solutions are
obtained for $T=100$ values of $\lambda$'s, as detailed in
Section~\ref{sec:sequential_safe_rules}.
Remark that the grid used is the default one in both Scikit-Learn and the
glmnet R package.
With our proposed GAP SAFE screening we obtain on all datasets substantial gains in computational time. We can already
get an up to 3x speedup when we require a duality gap smaller than
$10^{-4}$. The interest of the screening is even clearer for higher accuracies:
GAP SAFE sphere is 11x faster than its competitors on the Leukemia
dataset, at accuracy $10^{-8}$. One can observe that with the parameter grid
used here, the larger is $p$ compared to $n$, the higher is the gain
in computation time.

In our experiments, the other safe screening rules did not show much speed-up.
As one can see on Figure~\ref{fig:screening_proportion}, those screening rules
keep all the active variables for a wide range of $\lambda$'s. The algorithm is
thus faster for large $\lambda$'s but slower afterwards, since we still compute
the screening tests. Even if one can avoid some of these useless
computations thanks to formulas like \eqref{eq:lambda_critic_dynamic} or
\eqref{eq:lambda_critic_static}, the corresponding speed-up would not be
significant. 


\begin{figure}
\centering
\includegraphics[width=1.00\linewidth]{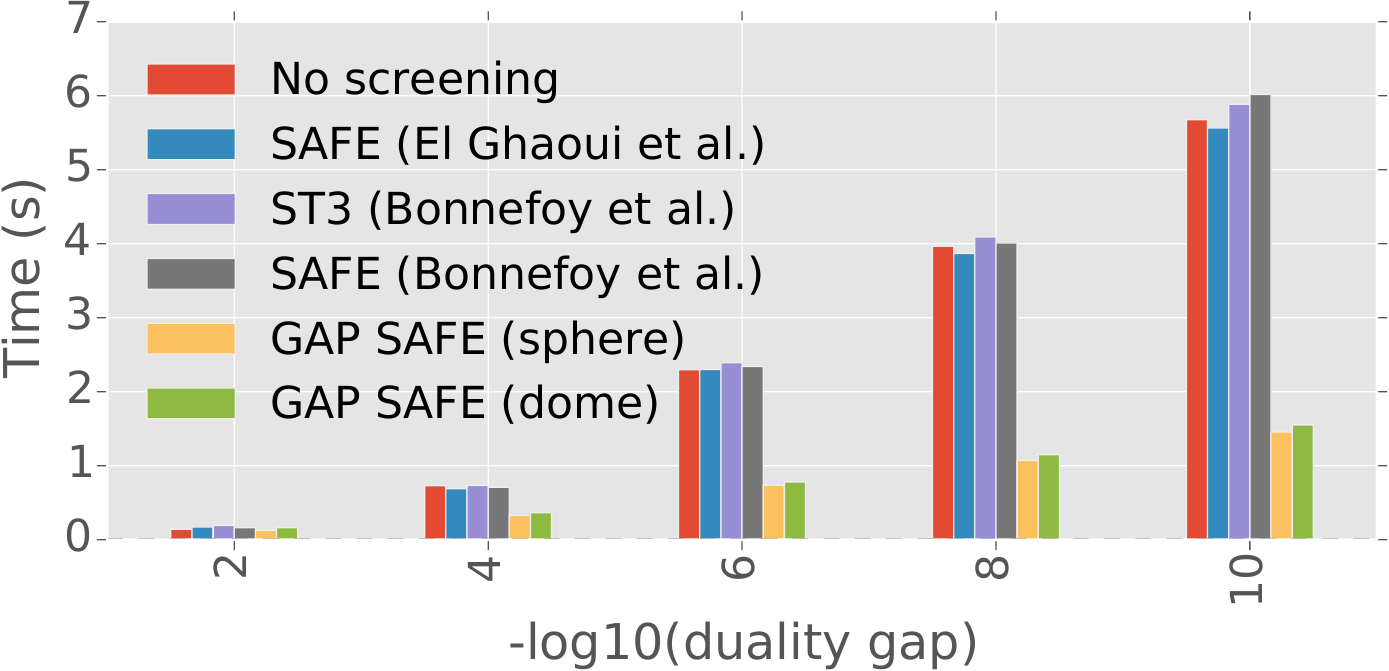}
\caption{Time to reach convergence using various screening
rules on bag of words from the 20newsgroup dataset (sparse data: with
$n=961, p=10094$).
}
\label{fig:news_gaps_bars}
\end{figure}

\begin{figure}
\centering
\includegraphics[width=1.00\linewidth]{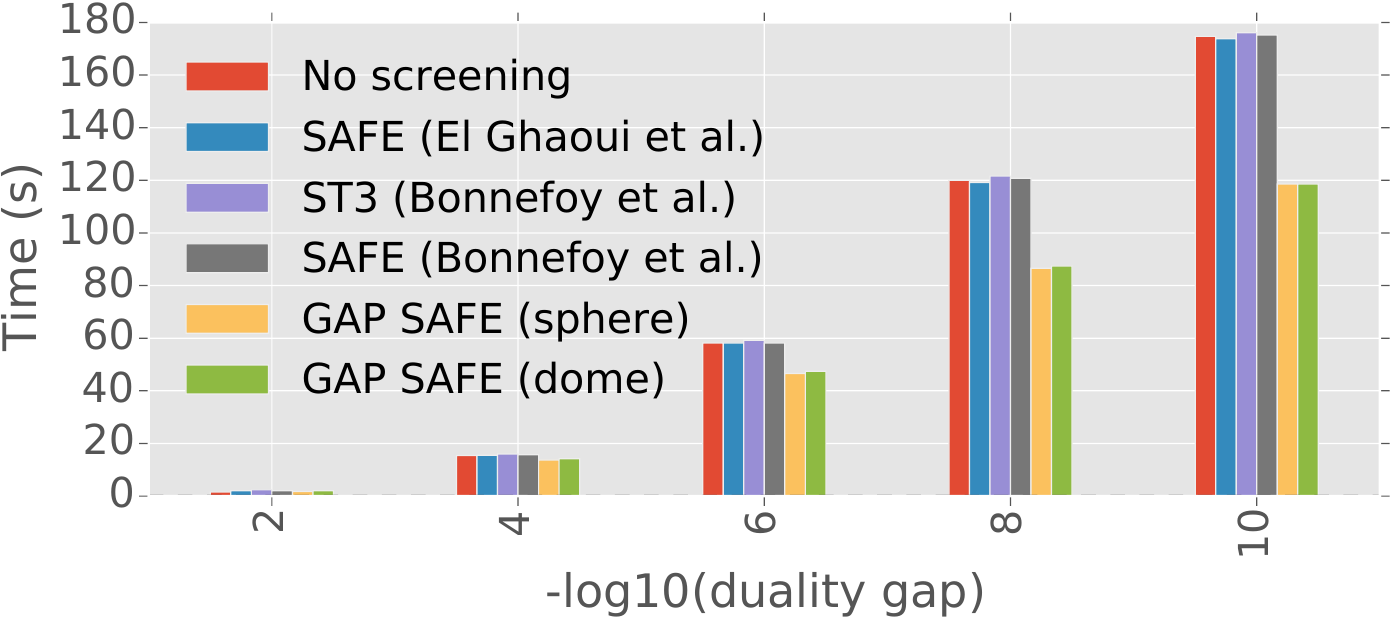}
\caption{Computation time to reach convergence using different screening
strategies on the RCV1 (Reuters Corpus Volume 1) dataset (sparse data with
$n=20242$ and $p=47236$).
}
\label{fig:rcv1_gaps_bars}
\end{figure}



\section{Conclusion}
We have presented new results on safe rules
for accelerating algorithms solving the Lasso problem
(see Appendix for extension to the Elastic Net).
First, we have introduced the framework of converging safe rules, a key concept
independent of the implementation chosen.
Our second contribution was to leverage duality gap computations to create
two safer rules satisfying the aforementioned convergence properties.
Finally, we demonstrated the important practical benefits of those new
rules by applying them to standard dense and sparse datasets using a
coordinate descent solver.
Future works will extend our
framework to generalized linear model and group-Lasso.

\section*{Acknowledgment}
The authors would like to thanks Jalal Fadili and Jingwei Liang for helping clarifying some misleading statements on the equicorrelation set.
We acknowledge the support from Chair Machine Learning for Big Data
at T\'el\'ecom ParisTech and from the Orange/T\'el\'ecom ParisTech
think tank phi-TAB. This work benefited from the support of the
"FMJH Program Gaspard Monge in optimization and operation
research", and from the support to this program from EDF.

\bibliography{references_all}

\providecommand{\AC}{A.-C}\providecommand{\CA}{C.-A}\providecommand{\CH}{C.-H}\providecommand{\CJ}{C.-J}\providecommand{\JC}{J.-C}\providecommand{\JP}{J.-P}\providecommand{\JB}{J.-B}\providecommand{\JF}{J.-F}\providecommand{\JJ}{J.-J}\providecommand{\JM}{J.-M}\providecommand{\KW}{K.-W}\providecommand{\PL}{P.-L}\providecommand{\RE}{R.-E}\providecommand{\SJ}{S.-J}\providecommand{\XR}{X.-R}\providecommand{\WX}{W.-X}
\begin{thebibliography}{41}
\providecommand{\natexlab}[1]{#1}
\providecommand{\url}[1]{\texttt{#1}}
\expandafter\ifx\csname urlstyle\endcsname\relax
  \providecommand{\doi}[1]{doi: #1}\else
  \providecommand{\doi}{doi: \begingroup \urlstyle{rm}\Url}\fi

\bibitem[Bach(2008)]{Bach08b}
Bach, F.
\newblock Bolasso: model consistent {Lasso} estimation through the bootstrap.
\newblock In \emph{ICML}, 2008.

\bibitem[Beck \& Teboulle(2009)Beck and Teboulle]{Beck_Teboulle09}
Beck, A. and Teboulle, M.
\newblock A fast iterative shrinkage-thresholding algorithm for linear inverse
  problems.
\newblock \emph{SIAM J. Imaging Sci.}, 2\penalty0 (1):\penalty0 183--202, 2009.

\bibitem[Bickel et~al.(2009)Bickel, Ritov, and
  Tsybakov]{Bickel_Ritov_Tsybakov09}
Bickel, P.~J., Ritov, Y., and Tsybakov, A.~B.
\newblock Simultaneous analysis of {Lasso} and {D}antzig selector.
\newblock \emph{Ann. Statist.}, 37\penalty0 (4):\penalty0 1705--1732, 2009.

\bibitem[Bonnefoy et~al.(2014{\natexlab{a}})Bonnefoy, Emiya, Ralaivola, and
  Gribonval]{Bonnefoy_Emiya_Ralaivola_Gribonval14}
Bonnefoy, A., Emiya, V., Ralaivola, L., and Gribonval, R.
\newblock A dynamic screening principle for the lasso.
\newblock In \emph{EUSIPCO}, 2014{\natexlab{a}}.

\bibitem[Bonnefoy et~al.(2014{\natexlab{b}})Bonnefoy, Emiya, Ralaivola, and
  Gribonval]{Bonnefoy_Emiya_Ralaivola_Gribonval15}
Bonnefoy, A., Emiya, V., Ralaivola, L., and Gribonval, R.
\newblock {Dynamic Screening: Accelerating First-Order Algorithms for the Lasso
  and Group-Lasso}.
\newblock \emph{ArXiv e-prints}, 2014{\natexlab{b}}.

\bibitem[B{\"u}hlmann \& {van de Geer}(2011)B{\"u}hlmann and {van de
  Geer}]{Buhlmann_vandeGeer11}
B{\"u}hlmann, P. and {van de Geer}, S.
\newblock \emph{Statistics for high-dimensional data}.
\newblock Springer Series in Statistics. Springer, Heidelberg, 2011.
\newblock Methods, theory and applications.

\bibitem[Cand{\`e}s et~al.(2008)Cand{\`e}s, Wakin, and
  Boyd]{Candes_Wakin_Boyd08}
Cand{\`e}s, E.~J., Wakin, M.~B., and Boyd, S.~P.
\newblock Enhancing sparsity by reweighted {$l_1$} minimization.
\newblock \emph{J. Fourier Anal. Applicat.}, 14\penalty0 (5-6):\penalty0
  877--905, 2008.

\bibitem[Chambolle \& Pock(2011)Chambolle and Pock]{Chambolle_Pock11}
Chambolle, A. and Pock, T.
\newblock A first-order primal-dual algorithm for convex problems with
  applications to imaging.
\newblock \emph{J. Math. Imaging Vis.}, 40\penalty0 (1):\penalty0 120--145,
  2011.

\bibitem[Chen et~al.(1998)Chen, Donoho, and Saunders]{Chen_Donoho_Saunders98}
Chen, S.~S., Donoho, D.~L., and Saunders, M.~A.
\newblock Atomic decomposition by basis pursuit.
\newblock \emph{SIAM J. Sci. Comput.}, 20\penalty0 (1):\penalty0 33--61
  (electronic), 1998.

\bibitem[Efron et~al.(2004)Efron, Hastie, Johnstone, and
  Tibshirani]{Efron_Hastie_Johnstone_Tibshirani04}
Efron, B., Hastie, T., Johnstone, I.~M., and Tibshirani, R.
\newblock Least angle regression.
\newblock \emph{Ann. Statist.}, 32\penalty0 (2):\penalty0 407--499, 2004.
\newblock With discussion, and a rejoinder by the authors.

\bibitem[{El Ghaoui} et~al.(2012){El Ghaoui}, Viallon, and
  Rabbani]{ElGhaoui_Viallon_Rabbani12}
{El Ghaoui}, L., Viallon, V., and Rabbani, T.
\newblock Safe feature elimination in sparse supervised learning.
\newblock \emph{J. Pacific Optim.}, 8\penalty0 (4):\penalty0 667--698, 2012.

\bibitem[Fan \& Li(2001)Fan and Li]{Fan_Li01}
Fan, J. and Li, R.
\newblock Variable selection via nonconcave penalized likelihood and its oracle
  properties.
\newblock \emph{J. Amer. Statist. Assoc.}, 96\penalty0 (456):\penalty0
  1348--1360, 2001.

\bibitem[Fan \& Lv(2008)Fan and Lv]{Fan_Lv2008}
Fan, J. and Lv, J.
\newblock Sure independence screening for ultrahigh dimensional feature space.
\newblock \emph{J. Roy. Statist. Soc. Ser. B}, 70\penalty0 (5):\penalty0
  849--911, 2008.

\bibitem[Friedman et~al.(2007)Friedman, Hastie, H{\"o}fling, and
  Tibshirani]{Friedman_Hastie_Hofling_Tibshirani07}
Friedman, J., Hastie, T., H{\"o}fling, H., and Tibshirani, R.
\newblock Pathwise coordinate optimization.
\newblock \emph{Ann. Appl. Stat.}, 1\penalty0 (2):\penalty0 302--332, 2007.

\bibitem[Gramfort et~al.(2012)Gramfort, Kowalski, and
  H{\"a}m{\"a}l{\"a}inen]{Gramfort_Kowalski_Hamalainen12}
Gramfort, A., Kowalski, M., and H{\"a}m{\"a}l{\"a}inen, M.
\newblock Mixed-norm estimates for the {M/EEG} inverse problem using
  accelerated gradient methods.
\newblock \emph{Phys. Med. Biol.}, 57\penalty0 (7):\penalty0 1937--1961, 2012.

\bibitem[Haury et~al.(2012)Haury, Mordelet, Vera-Licona, and
  Vert]{Haury_Mordelet_Verra-Licona_Vert12}
Haury, {\AC}., Mordelet, F., Vera-Licona, P., and Vert, {\JP}.
\newblock {TIGRESS: Trustful Inference of Gene REgulation using Stability
  Selection.}
\newblock \emph{BMC systems biology}, 6\penalty0 (1):\penalty0 145, 2012.

\bibitem[Hiriart-Urruty \& Lemar{\'e}chal(1993)Hiriart-Urruty and
  Lemar{\'e}chal]{Hiriart-Urruty_Lemarechal93}
Hiriart-Urruty, {\JB}. and Lemar{\'e}chal, C.
\newblock \emph{Convex analysis and minimization algorithms. {I}}, volume 305.
\newblock Springer-Verlag, Berlin, 1993.

\bibitem[Kim et~al.(2007)Kim, Koh, Lustig, Boyd, and
  Gorinevsky]{Kim_Koh_Lustig_Boyd_Gorinevsky07}
Kim, {\SJ}., Koh, K., Lustig, M., Boyd, S., and Gorinevsky, D.
\newblock An interior-point method for large-scale {$l_1$}-regularized least
  squares.
\newblock \emph{IEEE J. Sel. Topics Signal Process.}, 1\penalty0 (4):\penalty0
  606--617, 2007.

\bibitem[Liang et~al.(2014)Liang, Fadili, and Peyr\'{e}]{Liang_Fadili_Peyere14}
Liang, J., Fadili, J., and Peyr\'{e}, G.
\newblock Local linear convergence of forward--backward under partial
  smoothness.
\newblock In \emph{NIPS}, pp.\  1970--1978, 2014.

\bibitem[Lustig et~al.(2007)Lustig, Donoho, and Pauly]{Lustig_Donoho_Pauly07}
Lustig, M., Donoho, D.~L., and Pauly, J.~M.
\newblock Sparse {MRI}: The application of compressed sensing for rapid {MR}
  imaging.
\newblock \emph{Magnetic Resonance in Medicine}, 58\penalty0 (6):\penalty0
  1182--1195, 2007.

\bibitem[Mairal(2010)]{Mairal}
Mairal, J.
\newblock \emph{Sparse coding for machine learning, image processing and
  computer vision}.
\newblock PhD thesis, {\'E}cole normale sup{\'e}rieure de Cachan, 2010.

\bibitem[Mairal \& Yu(2012)Mairal and Yu]{Mairal_Yu12}
Mairal, J. and Yu, B.
\newblock Complexity analysis of the lasso regularization path.
\newblock In \emph{ICML}, 2012.

\bibitem[Meinshausen \& B{\"u}hlmann(2010)Meinshausen and
  B{\"u}hlmann]{Meinshausen_Buhlmann10}
Meinshausen, N. and B{\"u}hlmann, P.
\newblock Stability selection.
\newblock \emph{J. Roy. Statist. Soc. Ser. B}, 72\penalty0 (4):\penalty0
  417--473, 2010.

\bibitem[Osborne et~al.(2000)Osborne, Presnell, and
  Turlach]{Osborne_Presnnell_Turlach00}
Osborne, M.~R., Presnell, B., and Turlach, B.~A.
\newblock A new approach to variable selection in least squares problems.
\newblock \emph{IMA J. Numer. Anal.}, 20\penalty0 (3):\penalty0 389--403, 2000.

\bibitem[Pedregosa et~al.(2011)Pedregosa, Varoquaux, Gramfort, Michel, Thirion,
  Grisel, Blondel, Prettenhofer, Weiss, Dubourg, Vanderplas, Passos,
  Cournapeau, Brucher, Perrot, and Duchesnay]{Pedregosa_etal11}
Pedregosa, F., Varoquaux, G., Gramfort, A., Michel, V., Thirion, B., Grisel,
  O., Blondel, M., Prettenhofer, P., Weiss, R., Dubourg, V., Vanderplas, J.,
  Passos, A., Cournapeau, D., Brucher, M., Perrot, M., and Duchesnay, E.
\newblock Scikit-learn: Machine learning in {P}ython.
\newblock \emph{J. Mach. Learn. Res.}, 12:\penalty0 2825--2830, 2011.

\bibitem[Rockafellar \& Wets(1998)Rockafellar and Wets]{Rockafellar_Wets98}
Rockafellar, R.~T. and Wets, R.~{\JB}.
\newblock \emph{Variational analysis}, volume 317 of \emph{Grundlehren der
  Mathematischen Wissenschaften [Fundamental Principles of Mathematical
  Sciences]}.
\newblock Springer-Verlag, Berlin, 1998.

\bibitem[Schmidt et~al.(2013)Schmidt, {Le Roux}, and
  Bach]{Schmidt_LeRoux_Bach13}
Schmidt, M., {Le Roux}, N., and Bach, F.
\newblock Minimizing finite sums with the stochastic average gradient.
\newblock \emph{arXiv preprint arXiv:1309.2388}, 2013.

\bibitem[Tibshirani(1996)]{Tibshirani96}
Tibshirani, R.
\newblock Regression shrinkage and selection via the lasso.
\newblock \emph{J. Roy. Statist. Soc. Ser. B}, 58\penalty0 (1):\penalty0
  267--288, 1996.

\bibitem[Tibshirani et~al.(2012)Tibshirani, Bien, Friedman, Hastie, Simon,
  Taylor, and Tibshirani]{Tibshirani_Bien_Friedman_Hastie_Simon_Tibshirani12}
Tibshirani, R., Bien, J., Friedman, J., Hastie, T., Simon, N., Taylor, J., and
  Tibshirani, R.~J.
\newblock Strong rules for discarding predictors in lasso-type problems.
\newblock \emph{J. Roy. Statist. Soc. Ser. B}, 74\penalty0 (2):\penalty0
  245--266, 2012.

\bibitem[Tibshirani(2013)]{Tibshirani13}
Tibshirani, R.~J.
\newblock The lasso problem and uniqueness.
\newblock \emph{Electron. J. Stat.}, 7:\penalty0 1456--1490, 2013.

\bibitem[Tseng(2001)]{Tseng01}
Tseng, P.
\newblock Convergence of a block coordinate descent method for
  nondifferentiable minimization.
\newblock \emph{J. Optim. Theory Appl.}, 109\penalty0 (3):\penalty0 475--494,
  2001.

\bibitem[Varoquaux et~al.(2012)Varoquaux, Gramfort, and
  Thirion]{Varoquaux_Thirion_Gramfort12}
Varoquaux, G., Gramfort, A., and Thirion, B.
\newblock Small-sample brain mapping: sparse recovery on spatially correlated
  designs with randomization and clustering.
\newblock In \emph{ICML}, 2012.

\bibitem[Wang et~al.(2013)Wang, Zhou, Wonka, and Ye]{Wang_Zhou_Wonka_Ye13}
Wang, J., Zhou, J., Wonka, P., and Ye, J.
\newblock Lasso screening rules via dual polytope projection.
\newblock In \emph{NIPS}, pp.\  1070--1078, 2013.

\bibitem[Xiang \& Ramadge(2012)Xiang and Ramadge]{Xiang_Ramadge11}
Xiang, Z.~J. and Ramadge, P.~J.
\newblock Fast lasso screening tests based on correlations.
\newblock In \emph{ICASSP}, pp.\  2137--2140, 2012.

\bibitem[Xiang et~al.(2011)Xiang, Xu, and Ramadge]{Xiang_Xu_Ramadge11}
Xiang, Z.~J., Xu, H., and Ramadge, P.~J.
\newblock Learning sparse representations of high dimensional data on large
  scale dictionaries.
\newblock In \emph{NIPS}, pp.\  900--908, 2011.

\bibitem[Xiang et~al.(2014)Xiang, Wang, and Ramadge]{Xiang_Wang_Ramadge14}
Xiang, Z.~J., Wang, Y., and Ramadge, P.~J.
\newblock Screening tests for lasso problems.
\newblock \emph{arXiv preprint arXiv:1405.4897}, 2014.

\bibitem[Xu \& Ramadge(2013)Xu and Ramadge]{Xu_Ramadge13}
Xu, P. and Ramadge, P.~J.
\newblock Three structural results on the lasso problem.
\newblock In \emph{ICASSP}, pp.\  3392--3396, 2013.

\bibitem[Zhang(2010)]{Zhang10}
Zhang, {\CH}.
\newblock Nearly unbiased variable selection under minimax concave penalty.
\newblock \emph{Ann. Statist.}, 38\penalty0 (2):\penalty0 894--942, 2010.

\bibitem[Zhang \& Zhang(2012)Zhang and Zhang]{Zhang_Zhang12}
Zhang, {\CH}. and Zhang, T.
\newblock A general theory of concave regularization for high-dimensional
  sparse estimation problems.
\newblock \emph{Statistical Science}, 27\penalty0 (4):\penalty0 576--593, 2012.

\bibitem[Zou(2006)]{Zou06}
Zou, H.
\newblock The adaptive lasso and its oracle properties.
\newblock \emph{J. Am. Statist. Assoc.}, 101\penalty0 (476):\penalty0
  1418--1429, 2006.

\bibitem[Zou \& Hastie(2005)Zou and Hastie]{Zou_Hastie05}
Zou, H. and Hastie, T.
\newblock Regularization and variable selection via the elastic net.
\newblock \emph{J. Roy. Statist. Soc. Ser. B}, 67\penalty0 (2):\penalty0
  301--320, 2005.

\end{thebibliography}
\bibliographystyle{icml2015}

\appendix
\newpage
\onecolumn

\section{Supplementary materials}
We provided in this Appendix some more details on the theoretical results given in the main part.

\subsection{Dome test}

Let us consider the  case where the safe region $\mathcal{C}$ is the dome
$\mathcal{D}(\domecenter,\domeradius,\domeprop,\domenormal)$, with parameters:
center $\domecenter$, radius $\domeradius$, relative distance ratio $\alpha$
and unit normal vector $\domenormal$.

The computation of the dome test formula proceeds as follows:

\begin{align}\label{eq:mu_dome}
 \sigma_{\mathcal{C}}(x_j)=
\begin{cases}
c^\top x_j + \domeradius \|x_j\| & \text{ if } \domenormal^\top x_j< -\alpha
\|x_j \|, \\
c^\top x_j - \domeradius \alpha  \domenormal^\top x_j +\domeradius \sqrt{(\|x_j\|^2-|\domenormal^\top x_j|^2)(1-\alpha^2)} & \text{ otherwise. }
\end{cases}
\end{align}
and so
\begin{align}\label{eq:mu_dome2}
 \sigma_{\mathcal{C}}(-x_j)=
\begin{cases}
-c^\top x_j + \domeradius \|x_j\| & \text{ if } -\domenormal^\top x_j< -\alpha
\|x_j \|, \\
-c^\top x_j + \domeradius \alpha  \domenormal^\top x_j +\domeradius \sqrt{(\|x_j\|^2-|\domenormal^\top x_j|^2)(1-\alpha^2)} & \text{ otherwise. }
\end{cases}
\end{align}
With the previous display we can now compute
$\mu_{\mathcal{C}}(x_j):=\max( \sigma_{\mathcal{C}}(x_j),\sigma_{\mathcal{C}}(-x_j))$. Thanks to the
Eq.~\eqref{eq:sphere_test},  we express our dome test as:
\begin{equation}
\text{ If } \, M_{\min} <c^\top x_j < M_{\max}, \, \text{ then } \, \tbeta{\lambda}_j=0.
\end{equation}

Using the former notation:

\begin{equation}
 M_{\max}=
\begin{cases}
1 - \domeradius \|x_j\| & \text{ if } \domenormal^\top x_j< -\alpha \|x_j\|, \\
1 + \domeradius \alpha  \domenormal^\top x_j -\domeradius \sqrt{(\|x_j\|^2-|\domenormal^\top x_j|^2)(1-\alpha^2)} & \text{ otherwise. }
\end{cases}
\end{equation}

\begin{equation}
 M_{\min}=
\begin{cases}
-1 + \domeradius \|x_j\| & \text{ if } -\domenormal^\top x_j< -\alpha \|x_j\|,
\\
-1 + \domeradius \alpha  \domenormal^\top x_j +\domeradius \sqrt{(\|x_j\|^2-|\domenormal^\top x_j|^2)(1-\alpha^2)} & \text{ otherwise. }
\end{cases}
\end{equation}

Let us introduce the following dome parameters, for any $\theta \in \dual$:
\begin{itemize}
 \item Center: $ \domecenter= (y/\lambda+\theta)/2$.
 \item Radius: $ \domeradius= \largeR[\theta]/2$.
 \item Ratio: $ \alpha= -1+2\smallR[\theta]^2/\largeR[\theta]^2$.
 \item Normal vector: $ \domenormal=(y/\lambda - \theta)/\largeR[\theta]$.
\end{itemize}

Reminding that the support function of a set
is the same as the support function of its closed convex hull \cite{Hiriart-Urruty_Lemarechal93}[Proposition V.2.2.1] means
that we only need to optimize over the dome introduced.
Therefore, one cannot improve our previous result by optimizing the problem on the intersection of the ball of radius
$\largeR$ and the complement of the ball of radius $\smallR$ (\ie the blue region in Figure
\ref{fig:couronne1}).

\subsection{Proof of Theorem~\ref{th:converging_safe}}\label{sec:proofs0}
\begin{proof}
Define $\max_{j \notin \mathcal{E}_{\lambda}}
| x_j^{\top} \ttheta{\lambda} |= t < 1$.
Fix $\epsilon>0$ such that $\epsilon<(1-t)/ (\max_{j \notin
\mathcal{E}_{\lambda}} \|x_j\|)$.
As $\mathcal{C}_k$ is a converging sequence containing $\ttheta{\lambda}$, its
diameter is converging to zero, and there exists $k_0 \in \bbN$ such that
$\forall k\geq k_0, \forall \theta \in \mathcal{C}_k, \|\theta
-\ttheta{\lambda}\|\leq \epsilon$. Hence, for any $j
\notin \mathcal{E}_{\lambda}$ and any $\theta \in \mathcal{C}_k$,
$|x_j^\top(\theta -\ttheta{\lambda})|\leq  (\max_{j \notin \mathcal{E}_\lambda}
\|x_j\|) \|\theta -\ttheta{\lambda}\| \leq
(\max_{j \notin \mathcal{E}_\lambda} \|x_j\|) \epsilon$.
Using the triangle inequality, one gets
\begin{align*}
|x_j^\top \theta|\leq & (\max_{j \notin \mathcal{E}_\lambda} \|x_j\|)
\epsilon + \max_{j \notin \mathcal{E}_\lambda } | x_j^{\top} \ttheta{\lambda} |
\\
\leq & (\max_{j \notin \mathcal{E}_\lambda} \|x_j\|) \epsilon+t< 1,
\end{align*}
provided that $\epsilon<(1-t)/ (\max_{j \notin \mathcal{E}_\lambda}
\|x_j\|)$. Thus, for any $k \geq k_0, \mathcal{E}_{\lambda}^c \subset
Z^{(\lambda)} (\mathcal{C}_k) = A^{(\lambda)} (\mathcal{C}_k)^c$ and $
A^{(\lambda)} (\mathcal{C}_k)
\subset \mathcal{E}_{\lambda}$.

For the reverse inclusion take $j\in \mathcal{E}_{\lambda}$, \ie
$| x_j^{\top} \ttheta{\lambda} | = 1$.
Since for all $k \in \bbN, \ttheta{\lambda} \in \mathcal{C}_{k}$, then
$j \in A^{(\lambda)}(\mathcal{C}_k)=\{ j \in [p] : \max_{\theta \in
\mathcal{C}_{k}} | x_j^{\top} \theta |  \geq 1 \}$ and the result holds.

\end{proof}

\subsection{Proof of Proposition~\ref{prop:sequential}}\label{sec:proofs}

We detail here the proof of Proposition~\ref{prop:sequential}.
\begin{proof}
We first use the fact that
$$G_{\lambda_{t-1}} (\beta,\theta)=\frac{1}{2}
\norm{X \beta -y}^2_2+\lambda_{t-1}\norm{\beta}_1 -
 \frac{1}{2}\norm{y}_2^2 + \frac{\lambda_{t-1}^2}{2}
\norm{\theta-\frac{y}{\lambda_{t-1}}}^2_2, $$
to obtain
\begin{equation*}
\norm{\beta}_1 = \frac{1}{\lambda_{t-1}}
\Big( \frac{1}{2}\norm{y}_2^2 - \norm{X \beta -y}^2_2
- \frac{\lambda_{t-1}^2}{2} \norm{\theta-\frac{y}{\lambda_{t-1}}}^2_2
+G_{\lambda_{t-1}} (\beta,\theta)\Big).
\end{equation*}
Then,
\begin{align*}
G_{\lambda_{t}} (\beta,&\theta) = \frac{1}{2}\norm{X \beta -y}^2_2 + \frac{\lambda_{t}}{\lambda_{t-1}}
\Big( \frac{1}{2}\norm{y}_2^2 - \frac{1}{2} \norm{X \beta -y}^2_2- \frac{\lambda_{t-1}^2}{2}
\norm{\theta-\frac{y}{\lambda_{t-1}}}^2_2
+G_{\lambda_{t-1}} (\beta,\theta)\Big) \\
& \qquad \qquad
 - \frac{1}{2}\norm{y}^2_2 + \frac{\lambda_t^2}{2} \norm{\theta - \frac{y}{\lambda_{t}}}^2_2 \\
= & \frac{1}{2}(\frac{\lambda_{t}}{\lambda_{t-1}}-1)\norm{y}_2^2 + \frac{1}{2}(1-\frac{\lambda_{t}}{\lambda_{t-1}}) \norm{X \beta -y}^2_2
+ \frac{\lambda_{t}}{\lambda_{t-1}}G_{\lambda_{t-1}} (\beta,\theta)
+\frac{1}{2}\big( \norm{\lambda_t \theta -y}^2_2 - \frac{\lambda_{t}}{\lambda_{t-1}}\norm{\lambda_{t-1} \theta -y}^2_2\big)\\
= &\frac{1}{2}(\frac{\lambda_{t}}{\lambda_{t-1}}-1)\norm{y}_2^2
+\frac{1}{2}(1-\frac{\lambda_{t}}{\lambda_{t-1}}) \norm{X \beta -y}^2_2+ \frac{\lambda_{t}}{\lambda_{t-1}}G_{\lambda_{t-1}} (\beta,\theta)\\
& \qquad \qquad+\frac{1}{2}\Big( \norm{\lambda_t  \theta -y}^2_2 -
\frac{\lambda_{t}}{\lambda_{t-1}}\big(\norm{\lambda_t  \theta
-y}^2_2+\norm{(\lambda_{t-1}-\lambda_t )\theta}_2^2 + 2 ( \lambda_t  \theta
-y)^\top (\lambda_{t-1}-\lambda_t )\theta \big)\Big).
\end{align*}
We deal with the dot product as
\[
2 \lambda_t  (\lambda_{t-1}-\lambda_t ) (\theta -\frac{y}{\lambda_{t}})^\top
\theta =\lambda_t  (\lambda_{t-1}-\lambda_t ) \big(
\norm{\theta}^2_2+\norm{\theta-\frac{y}{\lambda_{t}}}^2_2 -
\norm{\frac{y}{\lambda_{t}}}^2_2\big).
\]
Hence,
\begin{align*}
G_{\lambda_{t}} (\beta,\theta)  = &
\frac{1}{2}(\frac{\lambda_{t}}{\lambda_{t-1}}-1+\frac{1}{\lambda_{t-1}}(\lambda_{t-1}-\lambda_t ))\norm{y}_2^2
+ \frac{1}{2}(1-\frac{\lambda_{t}}{\lambda_{t-1}}) \norm{X \beta -y}^2_2 \\
&-\frac{\lambda_t}{2}(\lambda_{t-1}-\lambda_t ))\norm{\theta}^2_2 +\frac{1}{2}( 1 - \frac{\lambda_{t}}{\lambda_{t-1}} - \frac{1}{\lambda_{t-1}}(\lambda_{t-1}-\lambda_t ) )\norm{\lambda_t  \theta -y}^2_2+\frac{\lambda_{t}}{\lambda_{t-1}}G_{\lambda_{t-1}} (\beta,\theta) \\
=& \frac{1}{2}\left(1-\frac{\lambda_{t}}{\lambda_{t-1}}\right) \norm{X \beta -y}^2_2
- \frac{\lambda_t}{2}(\lambda_{t-1}-\lambda_t )) \| \theta\|^2+
\frac{\lambda_{t}}{\lambda_{t-1}}G_{\lambda_{t-1}} (\beta,\theta).\\
\end{align*}
We observe in the end that
\begin{align*}
\frac{2}{\lambda_t^2}G_{\lambda_{t}} (\beta,\theta)  = &
\left(1-\frac{\lambda_{t}}{\lambda_{t-1}}\right)
 \norm{\frac{X \beta -y}{\lambda_t}}^2_2  -\left(\frac{\lambda_{t-1}}{\lambda_{t}}-1 \right)\norm{\theta}^2_2
+ \frac{2}{\lambda_{t-1}\lambda_t}G_{\lambda_{t-1}} (\beta,\theta).
\end{align*}
\qedhere
\end{proof}

\subsection{Elastic-Net}
The previously proposed tests can be adapted straightforwardly to the
Elastic-Net estimator \cite{Zou_Hastie05}. We provide here some more details
for the interested reader.
\begin{equation}\label{eq:enet}
\min_{\beta \in \bbR^p} \frac{1}{2} \norm{X \beta - y}_2^2
+ \lambda  \alpha  \norm{\beta}_1
+ \frac{\lambda}{2}(1 - \alpha)\norm{\beta}_2^2.
\end{equation}
One can reformulate this problem as a Lasso problem
\begin{equation}\label{eq:larger}
\min_{\beta \in \bbR^{p}} \frac{1}{2} \norm{\tilde{X} \beta - \tilde{y}}_2^2 +
\lambda \alpha \norm{\beta}_1,
\end{equation}
where $\tilde{X}=\begin{pmatrix}
                 X\\
		\sqrt{(1-\alpha)\lambda} I_p
                 \end{pmatrix}\in \bbR^{n+p,p}
$
and $\tilde{y}=\begin{pmatrix}
                y\\
 0
               \end{pmatrix} \in \bbR^{n+p}.
$
With this modification all the tests introduced for the Lasso can be
adapted for the Elastic-Net.

\end{document}